\title{Deep Robust Multilevel Semantic Cross-Modal Hashing}
\author{
Ge Song$^{1,2,3}$
\and
Jun Zhao$^{4}$\and
Xiaoyang Tan$^{1,2,3}$\thanks{Corresponding author}
\affiliations
$^1$College of Computer Science and Technology, Nanjing University of Aeronautics and Astronautics \\
$^2$MIIT Key Laboratory of Pattern Analysis and Machine Intelligence, China \\
$^3$Collaborative Innovation Center of Novel Software Technology and Industrialization, China\\
$^4$Nanyang Technological University, Singapore\\
\emails
\{sunge, x.tan\}@nuaa.edu.cn, junzhao@ntu.edu.sg
}
\begin{document}

\maketitle

\begin{abstract}
Hashing based cross-modal retrieval has recently made significant
progress. But straightforward embedding data from different
modalities involving rich semantics into a joint Hamming space will
inevitably produce false codes due to the intrinsic modality
discrepancy and noises. We present a novel Robust Multilevel
Semantic Hashing (RMSH) for more accurate multi-label cross-modal
retrieval. It seeks to preserve fine-grained similarity among data
with rich semantics,i.e., multi-label, while explicitly require
distances between dissimilar points to be larger than a specific
value for strong robustness. For this, we give an effective bound of
this value based on the information coding-theoretic analysis, and
the above goals are embodied into a margin-adaptive triplet loss.
Furthermore, we introduce pseudo-codes via fusing multiple hash
codes to explore seldom-seen semantics, alleviating the sparsity
problem of similarity information. Experiments on three benchmarks
show the validity of the derived bounds, and our method achieves
state-of-the-art performance.\end{abstract}

\section{Introduction}
\label{sec:intro} Cross-modal retrieval, aiming to search similar
instances in one modality with the query from another, has gained
increasing attention due to its fundamental role in large-scale
multimedia
applications~\cite{cao2019video,wang2019camp,dutta2019semantically,zhu2019r2gan}.
The difficulty of the similarity measurement of data from different
modals makes this task very challenging, which is known as the
heterogeneity gap~\cite{BaltrusaitisAM19}. An essential idea to
bridge this gap is mapping different modals into a joint feature
space such that they become computationally comparable, and it is
widely exploited by previous
work~\cite{XuLYDL19,ZhenHWP19,Zhan0YZT018}. Among them,
hashing-based method~\cite{LiuLZWHJ18,ShiYZWP19}, embedding the data
of interest into a low-dimensional Hamming space, has gradually
become the mainstream approach due to the low memory usage and high
query speed of binary codes.

\begin{figure}
\begin{tabular}{c}
\includegraphics[scale=.4]{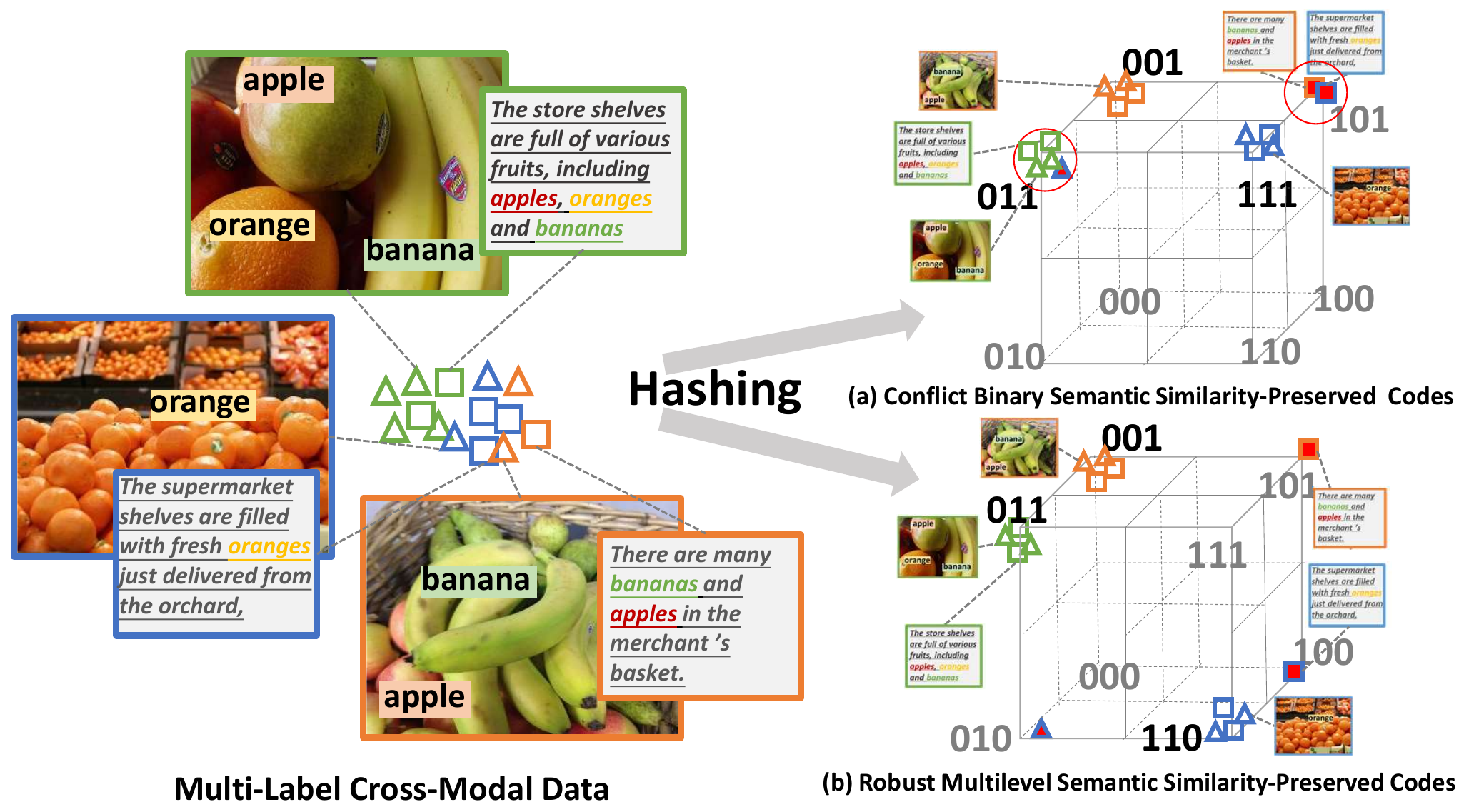}
\end{tabular}
\caption{Illustration of the robust multilevel semantic
similarity-preserved codes for multi-label cross-modal data. The
points filled with red color in Hamming space denote the false codes
caused by the intrinsic noise of data.}
\label{fig:coding_efficiency}
\end{figure}

While recent works have made significant progress, there are several
drawbacks still in existing cross-modal hashing methods. First,
embedding different modalities data sharing the same semantic into
the unified hash codes is hard, since the inherent modality
discrepancy and noises will inevitably cause false codes.
Nevertheless, most approaches learn to hash straightforwardly with
seldom considerations for this problem. They tend to embed data of
different semantics into adjacency vertexes in Hamming space, which
dramatically increases the collision probability of correct and
false hash codes. Although some works~\cite{Jiang2017Deep} mentioned
this problem and introduced the bit balance~\cite{WangZSSS18}
constraint for maximizing the information provided by each bit, it
is too simple and leads to the burden of seeking proper
hyper-parameter for effective learning. Second, the query always is
complicated in real applications, involving rich semantics, e.g.,
multi-labels. But numerous work could not run such queries to return
satisfying results that are consistent with the humans' cognition on
semantic similarity. They~\cite{Jiang2017Deep,HuWZP19} focus on
preserving simple similarity structures (i.e., similar or
dissimilar) rather than more fine-grained ones, and the used
similarity information is often very sparse. We illustrate these
problems in Fig.~\ref{fig:coding_efficiency} (a), the image and text
data associate with multiple tags, and their similarity score is not
just 0 or 1. If learning hash for only keeping 0-1 similarity
without proper constraints, i.e., push away the dissimilar points
and pull the similar points, the conflict codes will happen,
e.g.,'101'. Besides, the data of 'apple, banana' is more similar
than that of 'orange' to the data of 'apple, orange, banana,' but
the two former's hashing codes have the same Hamming distance with
the later's.

We observe that if the representation ability of binary codes with
fixed length is adequate, hashing functions should attempt to
preserve the complete fine-grained similarity structure for more
accurate retrieval. Simultaneously, we can explicitly impose the
distances between codes, whose similarities are zero, to be larger
or equal than a specific value $\delta$ to make the learned hash
codes more robust. We call $\delta$ as {\it robust parameter} and
the learned codes as {\it robust multilevel semantic
similarity-preserved codes}. As shown in
Fig.~\ref{fig:coding_efficiency}(b), we hash multi-label cross-modal
data into a 3-bit Hamming space according to their subtle similarity
so that the distance of irrelevant data's codes is larger or equal
than 3 (i.e., the 'orange' and 'apple, banana' data) and the
distance of the more relevant data's codes (i.e., the 'apple,
banana' and 'orange, apple, banana' data) is smaller than that of
normal relevant data's codes (i.e., the 'orange' and 'orange, apple,
banana' data), then no conflict happens. Intuitively, the larger
$\delta$ is, the more robust learned codes are, and the more levels
of the semantic similarity can be encoded. We could embody this
constraint in the objective of hashing learning. However, endowing
too large $\delta$ is not practical due to the limited coding power
of length-fixed binary codes and the uncertainty of the hashing
process. The question thus becomes how to find an appropriate
$\delta$. In theory, as the Hamming space and the semantic
similarity information are definite, the assumption that all data
are well hashing, i.e., no false codes, can be helpful to reduce
uncertainty and ease the derivation of the range of effective
$\delta$.

Here, we briefly describe our answer to the above question under the
previous assumption. We would like to encode semantic and similarity
information of data to $K$-bit binary codes. The maximum number of
$K$-bit codes with minimum pairwise Hamming distance $\delta$ is
certain. According to the information coding theory, the log of this
number should be larger than the amount of semantic information, and
the $\delta$-bits should be able to encode the neighbor similarity
information of each point. Based on these facts, we derive the
bounds of proper $\delta$ and detail the process in
Sec.~\ref{sec:emm}.

Inspired by the above observation, we propose a novel deep Robust
Multilevel Semantic Hashing (RMSH), which treats preserving the
complete multilevel semantic similarity structure of multi-label
cross-modal data with theoretically guaranteed distance constraint
between dissimilar data, as the objective to learn hash functions.
For this, a margin-adaptive triplet loss is adopted to control the
distance of dissimilar points in Hamming space explicitly, meanwhile
embedding similar points with a fine-grained level. To alleviate the
sparsity problem of similarity information, we further present
fusing multiple hash codes at the semantic level to generate
pseudo-codes, exploring the seldom-seen semantics. The main
contributions are summarized as follows.
\begin{itemize}[leftmargin=1em,topsep=0em]
\item A novel hashing method, named RMSH, is proposed to learn multilevel semantic similarity-preserved codes for accurate multi-label cross-modal retrieval. In which, to exploit the finite Hamming space for improving the robustness of learned codes, we require the distance between codes of dissimilar points satisfies larger than a specific value. For more effective learning, we perform a theoretical analysis of this value and bring out its effective range.
\item To capture the fine-grained semantic similarity structure in coupling with the elaborated distance constraint, we present a margin-adaptive triplet loss. Moreover, a new pseudo-codes network is introduced, tailored to explore more rare and complicated similarity structures.
\item Extensive experimental results demonstrate the effectiveness of the derived bounds, and the proposed RMSH approach yields the state-of-the-art retrieval performance on three cross-modality datasets.
\end{itemize}

\section{Related work}
\label{sec:relatedwork} In this section, we briefly review related
works concerning cross-modal hashing methods.

The cross-modal
hashing~\cite{cao2018cross,zhang2018attention,chen2019two,lu2019flexible,li2019coupled,su2019deep}
can be grouped into two types, unsupervised and supervised. The
former utilizes the co-occurrence information of the multi-modal
pair (e.g., image-text) to maximize their correlation in the common
Hamming space. The representative is Collective Matrix Factorization
Hashing (CMFH)~\cite{Ding2014Collective}, which generates unified
hash codes for multiple modalities by performing collective matrix
factorization from different views. The supervised ones aim to
preserve semantic similarity. Semantic Correlation Maximization
(SCM)~\cite{Zhang2014Large} uses hash codes to reconstruct semantic
similarity matrix. Semantics-Preserving Hashing
(SePH)~\cite{Lin2015Semantics} minimizes KL-divergence between the
hash codes and semantics distributions. Recently, the success of
deep learning prompted the development of cross-modal hashing.
Cross-Modal Deep Variational Hashing (CMDVH)~\cite{LiongLT017}
infers fusion binary codes from multi-modal data as the latent
variable for model-specific networks to approximate. Self-supervised
Adversarial Hashing (SSAH)~\cite{LiDL0GT18} incorporates the
adversarial learning into cross-modal hashing. Equally-Guided
Discriminative Hashing (EGDH)~\cite{ShiYZWP19} jointly considers
semantic structure and discriminability to learn hash functions.
Similar to CMDVH, but Separated Variational Hashing Networks
(SVHNs)~\cite{HuWZP19} learns binary codes of semantic labels as the
latent variable. Despite their effectiveness, most of them ignore
the exploitation of limited representative of binary codes for
reducing the impact of the modality gap, and they fail to preserve
fine similarity for more accurate multi-label cross-modal retrieval.

\begin{figure*}[t]
\begin{center}
\includegraphics[width=1\linewidth]{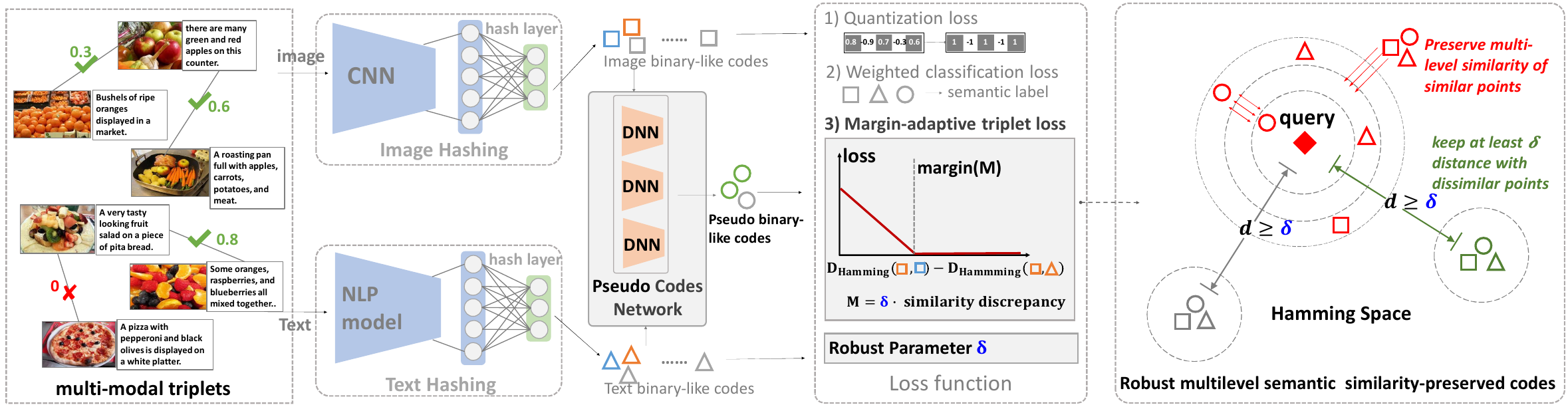}
\end{center}
    \caption{The architecture of the proposed RMSH includes two modality-specific hashing networks and a pseudo-codes network (PCN). The RMSH first takes multi-modal triplets into hashing networks to obtain hash codes, which are then manipulated by PCN to generated codes with seldom-seen semantics. Finally, a margin-adaptive triplet loss equipped with the pre-computed robust parameter $\delta$ is coupled with the weighted classification loss and quantization loss to learn robust multilevel semantic similarity-preserved codes.}
\label{fig:method}
\end{figure*}
\section{The proposed approach}
\label{sec:method} In this section, we first give the problem
definition and the effective bounds of the robust parameter defined
in Sec.~\ref{sec:intro}. Then, the configurations of the proposed
RMSH are detailed, including two hashing networks, a pseudo-code
network, and the corresponding objective function. An overview of
the RMSH is illustrated in Fig.~\ref{fig:method}.

\subsection{The Problem Definition}
\label{sec:problem} Without loss of generality, we use bi-modality
(e.g., image and text) retrieval for illustration in this paper.
Assume that we are given a multi-modal dataset
$\mathbf{D}=\{\mathbf{X}$, $\mathbf{Y}$, $\mathbf{L}\}$, where
$\mathbf{X}=\{x_i\}_{i=1}^{N}$, $\mathbf{Y}=\{y_i\}_{i=1}^{N}$,
$\mathbf{L}=\{l_i\}_{i=1}^{N}$ denotes the image modality, text
modality, and corresponding semantic labels, respectively. $x_i$ can
be features or raw pixels of images, $y_i$ is textual description or
tags, semantic label $l_i\in \{0,1\}^C$, where C is the number of
class. For any two samples, we define the cross-modal similarity
$\mathbf{S}$ as $S_{ij}=\frac{|l_i\cap l_j|}{max\{|l_i|,| l_j|\}}$.
Given training data $\mathbf{X}$, $\mathbf{Y}$ and $\mathbf{S}$, our
goal is to learn two hash functions:
$h^{(x)}(x)=b^{(x)}\in\{-1,1\}^K$ for the image modality and
$h^{(y)}(y)=b^{(y)}\in \{-1,1\}^K$ for the text modality, where $K$
is the length of the binary code, such that the similarity
relationship $\mathbf{S}$ is preserved and distances between
dissimilar data are larger or equal than a positive integer $\delta$
$(<K)$ for robustness, i.e., $\forall$ $x_i\in \mathbf{X}$, $y_j$,
$y_k\in \mathbf{Y}$, if $S_{ij}\ge S_{ik}$, then the Hamming
distance of their binary codes should satisfy
$d_{H}(b_i^{(x)},b_j^{(y)})$ $\le$ $d_{H}(b_i^{(x)},b_k^{(y)})$, and
vice versa, if $S_{ij}=0$, then $d_{H}(b_i^{(x)},b_j^{(y)})$ $\ge$
$\delta$, $\delta$ is named as {\it robust parameter}. Intuitively,
a larger $\delta$ makes codes more robust, but it cannot be too
large due to the finite representation power of $K$-bit binary
codes. In what follows, we investigate how to properly set this
parameter to obtain the balance between robustness and compactness.
\subsection{Bounds of Effective Robust Parameter}
\label{sec:emm} To learn robust hashing codes effectively, we give
the range of effective robust parameter $\delta$ in this section. In
essential, the goal of deep supervised hashing is encoding semantic
information $H(\pmb{L})$ and similarity information
$H(\pmb{S_{*,:}})$ of data\footnote{The semantic label $l$ of data
can be seen as the i.i.d. random variable from distribution $P(L)$.
Because $S$ is constructed by $l$, each row $S_{*,:}$ of $S$ can
also be seen as random variable.} by $K$-bit binary codes, where $H$
denotes entropy function. Therefore, according to the coding
theory~\cite{guruswami2012essential}, we have the following two
facts. {\bf (1)} To make sure that different semantics have unique
codes and the distances between the codes of irrelevant samples are
larger or equal than $\delta$, the number of $K$-bit binary codes
that satisfy pairwise minimum distance $\delta$ should be larger
than $2^{H(\pmb{L})}$. {\bf (2)} $\delta$ bits should be able to
encode the neighborhood semantic similarity information
$H(\pmb{S_{*,:}})$ of each sample $*$. Based on these two facts, we
derive the bounds of effective $\delta$ as follows. For simplicity's
sake, we assume that all data sharing the same semantic are embedded
into the same codes to eliminate the influence of noise.

\noindent {\bf 1) Upper bound.} By the fact (1), we have:
 \begin{equation}
 \label{eq:fact1_formula}
\begin{aligned}
 2^{H(\pmb{L})}\le A(K, \delta)
 \end{aligned}
\end{equation}
 where $A(K, \delta)$ denotes the maximum number of K-bit binary codes with pairwise minimum Hamming distance $\delta$, i.e., $A(K, \delta)=|\{b|b\in\{-1,1\}^K, for\ \forall\ b_i\ne b_j, d_H(b_i,b_j)\ge\delta\}|$. To obtain the range of effective $\delta$, we need estimate $A(K, \delta)$. However, an accurate estimation of $A(K, \delta)$ is challenging, which is still unsolved. Alternatively, its bounds are well studied~\cite{HelgertS73,AgrellVZ01,BelliniGS14}. Towards our goal, we introduce a well-known lower-bound of $A(K, \delta)$ to derive the upper-bound of $\delta$.
\newtheorem{myDef}{Definition}
\newtheorem{myproof}{proof}
\newtheorem{myTheo}{Theorem}
\newtheorem{myLemma}{Lemma}

\begin{myLemma}\label{lem:count_of_balls}(Gilbert-Varshamov Bound~\cite{gilbert1952comparison}).
Given $\delta$, $K\in \mathbb{N}$ be such that $\delta\le K$, we
have:
 \begin{equation}
 \label{eq:count_of_balls}
\begin{aligned}
\frac{2^K}{\sum_{i=0}^{\delta-1}\binom{K}{i}}\le A(K,\delta)
 \end{aligned}
\end{equation}
\end{myLemma}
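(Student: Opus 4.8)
The plan is to prove the bound by the classical greedy ``volume covering'' argument that underlies the Gilbert--Varshamov bound. First I would fix $K,\delta\in\mathbb{N}$ with $\delta\le K$ and consider the family of all subsets $C\subseteq\{-1,1\}^K$ all of whose pairwise Hamming distances are $\ge\delta$. Since the ambient cube $\{-1,1\}^K$ is finite, this family contains a set $C^\star$ that is \emph{maximal with respect to inclusion}: no point of $\{-1,1\}^K$ can be adjoined to $C^\star$ without violating the minimum-distance requirement. By the very definition of $A(K,\delta)$ as the largest attainable cardinality of such a set, we have $A(K,\delta)\ge|C^\star|$, so it suffices to lower-bound $|C^\star|$.

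Next I would use maximality to show that the Hamming balls of radius $\delta-1$ centred at the points of $C^\star$ cover the whole cube. Indeed, if some $x\in\{-1,1\}^K$ satisfied $d_H(x,c)\ge\delta$ for every $c\in C^\star$, then $C^\star\cup\{x\}$ would still meet the minimum-distance constraint, contradicting maximality of $C^\star$. Hence $\bigcup_{c\in C^\star}B(c,\delta-1)=\{-1,1\}^K$, where $B(c,r)=\{b\in\{-1,1\}^K:d_H(b,c)\le r\}$.

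Then I would count: each ball has size $|B(c,\delta-1)|=\sum_{i=0}^{\delta-1}\binom{K}{i}$, since a point at Hamming distance exactly $i$ from $c$ is obtained by choosing which $i$ of the $K$ coordinates to flip. A union bound over the covering then gives $2^K=|\{-1,1\}^K|\le\sum_{c\in C^\star}|B(c,\delta-1)|=|C^\star|\cdot\sum_{i=0}^{\delta-1}\binom{K}{i}$, and rearranging yields $|C^\star|\ge 2^K\big/\sum_{i=0}^{\delta-1}\binom{K}{i}$; combined with $A(K,\delta)\ge|C^\star|$ this is exactly~\eqref{eq:count_of_balls}.

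As for difficulty, there is essentially no analytic obstacle here: the only conceptual points that must be handled carefully are that one should take a code that is maximal under \emph{inclusion} (which is automatic by finiteness of $\{-1,1\}^K$) rather than one of maximum cardinality, and that the covering radius is $\delta-1$ and not $\delta$. Everything else reduces to the elementary ball-volume identity $|B(c,\delta-1)|=\sum_{i=0}^{\delta-1}\binom{K}{i}$ together with a single union bound over $C^\star$.
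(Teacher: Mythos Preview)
Your proof is correct and follows essentially the same covering argument as the paper: pick a code of maximal size, observe that the radius-$(\delta-1)$ Hamming balls around its codewords must cover $\{-1,1\}^K$, and compare volumes. The only cosmetic difference is that the paper works directly with a maximum-cardinality code $\Omega^{(\delta)}$ (which is automatically inclusion-maximal), whereas you take an arbitrary inclusion-maximal $C^\star$ and then use $A(K,\delta)\ge|C^\star|$; both routes yield the identical inequality.
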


To proof the Lemma~\ref{lem:count_of_balls}, we first give the
definition of Hamming Ball and then give the proof.

\begin{myDef}\label{def:hammming_ball}(Hamming Ball). Let $\delta$, $K\in \mathbb{N}$ such that $\delta\le K$. For $\forall$ $x$ $\in \Omega=\{-1,1\}^K$, the ball $B_{\delta}^K(x)$ denotes the set of vectors with distance from the $x$ less than or equal to $\delta$ and is defined as $B_{\delta}^K(x)=\{y\in\Omega| d_H(y,x)\le \delta\}$. Its volume is defined as $|B_{\delta}^K(x)|=\sum_{i=0}^{\delta}\binom{K}{i}$, and $|B_{\delta}^K|$ for short.
\end{myDef}

\begin{proof}\renewcommand{\qedsymbol}{}\label{prf:count_of_balls}
Let $\Omega^{(\delta)}$ denotes the greatest subset of
$\Omega=\{-1,1\}^K$ that $\forall$ $c, c'$ $\in$
$\Omega^{(\delta)}$, $d_H(c,c')\ge$ $\delta$ and
$A(K,\delta)=|\Omega^{(\delta)}|$, $\varepsilon=\delta-1$. For
$\forall$ $y\in$ $\Omega$, there exists at least one $c\in$
$\Omega^{(\delta)}$ such that $d_H(y,c)\le \varepsilon$, otherwise
y$\in$ $\Omega^{(\delta)}$. Hence, $\Omega$ is contained in the
union of all balls $B_{\varepsilon}^K(c)$, $c\in$
$\Omega^{(\delta)}$, that is $\Omega \subseteq \bigcup_{c \in
\Omega^{(\delta)}}B_{\varepsilon}^K(c)$, then we have $|\Omega|\le
\sum_{c \in
\Omega^{(\delta)}}|B_{\varepsilon}^K(c)|=|\Omega^{(\delta)}|\cdot|B_{\varepsilon}^K(c)|$,
thus
$A(K,\delta)=|\Omega^{(\delta)}|\ge\frac{|\Omega|}{|B_{\varepsilon}^K(c)|}=
\frac{2^K}{\sum_{i=0}^{\delta-1}\binom{K}{i}}$.
\end{proof}

 As we see that if $2^{H(\pmb{L})}$ is smaller than the lower-bound of $A(K,\delta)$, Eq.~(\ref{eq:fact1_formula}) also holds. By the Lemma~\ref{lem:count_of_balls}, we thus have:
 \begin{equation}
 \label{eq:fact1_further_formula}
\begin{aligned}
 2^{H(\pmb{L})}\le \frac{2^K}{\sum_{i=0}^{\delta-1}\binom{K}{i}}
 \end{aligned}
\end{equation}

Still, the computation of $\sum_{i=0}^{\delta-1}\binom{K}{i}$ in
Eq.~(\ref{eq:fact1_further_formula}) is hard. Here, we show an
upper-bound of this value via the entropy bound of the volume of a
Hamming ball.

\begin{myLemma}\label{lem:volume_of_Hamming_ball}(Volume of Hamming ball).
Given $\delta$, $K\in \mathbb{N}$ and $\delta=pK$, $p\in$ $[0,1/2]$,
we have:
 \begin{equation}
 \label{eq:volume_of_Hamming_ball}
\begin{aligned}
|B^K_{\delta}|\le 2^{H(p)K}
 \end{aligned}
\end{equation}
where $H(p)$ $=-p\log p-(1-p)\log (1-p)$.
\end{myLemma}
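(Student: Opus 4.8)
The plan is to invoke the classical binomial-expansion trick for bounding a truncated binomial sum, since by Definition~\ref{def:hammming_ball} we have $|B^K_\delta|=\sum_{i=0}^{\delta}\binom{K}{i}$ and we want to show this is at most $2^{H(p)K}$. First I would start from the trivial identity $1=(p+(1-p))^K=\sum_{i=0}^{K}\binom{K}{i}p^i(1-p)^{K-i}$ and discard every term with index $i>\delta$; since all summands are nonnegative, this only decreases the right-hand side, giving $1\ge\sum_{i=0}^{\delta}\binom{K}{i}p^i(1-p)^{K-i}$.

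Next I would lower-bound each surviving summand uniformly. Writing $p^i(1-p)^{K-i}=(1-p)^K\left(\frac{p}{1-p}\right)^i$, the key observation is that the hypothesis $p\le 1/2$ forces $\frac{p}{1-p}\le 1$, so $\left(\frac{p}{1-p}\right)^i$ is non-increasing in $i$; hence for every index in the range $0\le i\le\delta=pK$ we have $\left(\frac{p}{1-p}\right)^i\ge\left(\frac{p}{1-p}\right)^{pK}$. Substituting this into the truncated sum yields $1\ge(1-p)^K\left(\frac{p}{1-p}\right)^{pK}\sum_{i=0}^{\delta}\binom{K}{i}=(1-p)^K\left(\frac{p}{1-p}\right)^{pK}|B^K_\delta|$.

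Finally I would rearrange and simplify the constant. Solving for $|B^K_\delta|$ gives $|B^K_\delta|\le(1-p)^{-K}\left(\frac{p}{1-p}\right)^{-pK}=p^{-pK}(1-p)^{-(1-p)K}=\left(p^{-p}(1-p)^{-(1-p)}\right)^{K}$, and since $\log_2\left(p^{-p}(1-p)^{-(1-p)}\right)=-p\log_2 p-(1-p)\log_2(1-p)=H(p)$, this equals $2^{H(p)K}$, as claimed. The degenerate endpoint $p=0$ (where $\delta=0$ and $|B^K_0|=1=2^0$) is consistent under the usual convention $0\log 0=0$.

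As for obstacles, the argument is short and the only genuine subtlety is the monotonicity step, which needs $\frac{p}{1-p}\le 1$ — that is, it is precisely the hypothesis $p\le 1/2$ that makes this proof work, and for $p>1/2$ the stated bound would be false in this form. One should also be careful that $H$ here denotes the binary entropy measured in bits, so $\log$ must be $\log_2$ for the final identity to close; if the paper intends natural logarithms throughout, the bound reads $|B^K_\delta|\le e^{H(p)K}$ instead, so I would fix the base-2 convention explicitly before starting.
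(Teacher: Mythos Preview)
Your proposal is correct and is essentially the same argument as the paper's: both use the binomial expansion $1=\sum_{i=0}^{K}\binom{K}{i}p^{i}(1-p)^{K-i}$, truncate at $i\le pK$, and exploit the monotonicity of $\left(\frac{p}{1-p}\right)^{i}$ under $p\le 1/2$ to pull out the uniform factor $\left(\frac{p}{1-p}\right)^{pK}$. Your write-up is in fact slightly cleaner than the paper's, which writes ``$=1$'' at the last step where it should be ``$\le 1$'' (the sum is only the truncated one), and your explicit remarks on the $p=0$ endpoint and the base of the logarithm are useful clarifications the paper omits.
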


\begin{proof}\renewcommand{\qedsymbol}{}\label{prf:volume_of_Hamming_ball}
\begin{equation}\nonumber
\begin{aligned}
\frac{|B^K_{\delta}|}{2^{H(p)K}}&=\frac{\sum_{i=0}^{\delta}\binom{K}{i}}{p^{-pK}(1-p)^{-(1-p)K}}\\
&=\sum_{i=0}^{pK}\binom{K}{i}(1-p)^K(\frac{p}{1-p})^{pK}\\
&<\sum_{i=0}^{pK}\binom{K}{i}(1-p)^K(\frac{p}{1-p})^{i} (\mathrm{since}\ \frac{p}{1-p}<1, i\le pK)\\
&=\sum_{i=0}^{pK}\binom{K}{i}(1-p)^{K-i}p^{i}=1(\mathrm{binomial}\ \mathrm{theorem})\\
\end{aligned}
\end{equation}
\end{proof}

By the Lemma~\ref{lem:volume_of_Hamming_ball} and
Eq.~(\ref{eq:fact1_further_formula}), we can obtain a computable
lower-bound of $A(K,\delta)$. By this bound, we give the theorem
about the upper bound of effective $\delta$ in our problem.

\begin{myTheo}\label{thm:dissimilar_hamming_distance_satisfitation}
For information $H(\pmb{L})$, if using $K$-bit binary codes to
encode $l\in\pmb{L}$ such that for $\forall$ $l_i\ne l_j$,
$d_{H}(b_i,b_j)\geq \delta$, $\delta\le K/2$, then, $\delta$ should
satisfy:
\begin{equation}
\label{eq:upper_bound}
\begin{aligned}
H\big(\frac{\delta-1}{K}\big) \leq 1-\frac{H(\mathbf{\pmb{L}})}{K}
 \end{aligned}
\end{equation}
\end{myTheo}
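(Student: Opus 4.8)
The plan is to chain together the three ingredients already assembled in this section: the coding-theoretic requirement of Fact~(1), the Gilbert--Varshamov lower bound (Lemma~\ref{lem:count_of_balls}), and the entropy bound on the volume of a Hamming ball (Lemma~\ref{lem:volume_of_Hamming_ball}). The key point is that we never need to evaluate $A(K,\delta)$ exactly; it suffices to exhibit a \emph{sufficient} condition on $\delta$ that forces the inequality $2^{H(\pmb{L})}\le A(K,\delta)$ of Eq.~(\ref{eq:fact1_formula}) to hold, and the stated bound will be exactly such a condition.

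First I would recall that Eq.~(\ref{eq:fact1_further_formula}), obtained from Fact~(1) together with Lemma~\ref{lem:count_of_balls}, tells us it is enough to guarantee $2^{H(\pmb{L})}\sum_{i=0}^{\delta-1}\binom{K}{i}\le 2^{K}$, i.e.\ $\sum_{i=0}^{\delta-1}\binom{K}{i}\le 2^{K-H(\pmb{L})}$. Next I would identify the left-hand side with the volume of a Hamming ball of radius $\delta-1$: by Definition~\ref{def:hammming_ball}, $\sum_{i=0}^{\delta-1}\binom{K}{i}=|B^{K}_{\delta-1}|$. Setting $p=\frac{\delta-1}{K}$, the standing assumption $\delta\le K/2$ (together with $\delta\ge 1$) ensures $p\in[0,1/2]$ and that $pK=\delta-1$ is an integer, so Lemma~\ref{lem:volume_of_Hamming_ball} applies and yields $|B^{K}_{\delta-1}|\le 2^{H(p)K}=2^{H((\delta-1)/K)K}$.

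Combining these, a sufficient condition for Eq.~(\ref{eq:fact1_further_formula}) --- hence for Eq.~(\ref{eq:fact1_formula}) --- is $2^{H((\delta-1)/K)K}\le 2^{K-H(\pmb{L})}$. Taking base-$2$ logarithms and dividing by $K>0$ gives $H\big(\frac{\delta-1}{K}\big)\le 1-\frac{H(\pmb{L})}{K}$, which is the claimed inequality; since $H$ is increasing on $[0,1/2]$, this inequality caps $\frac{\delta-1}{K}$ from above and is therefore naturally read as an upper bound on the admissible $\delta$.

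The calculation itself is routine arithmetic with logarithms and binomial sums; the points that actually need care are conceptual rather than computational. One is the logical direction: the theorem is not the implication ``$2^{H(\pmb{L})}\le A(K,\delta)$ $\Rightarrow$ the bound'', but rather ``imposing the bound suffices to guarantee encodability'', so I would make explicit that we are deriving a conservative sufficient constraint, whose slack comes from the looseness of the Gilbert--Varshamov and entropy estimates. The other is verifying the hypothesis of Lemma~\ref{lem:volume_of_Hamming_ball}, namely $p=\frac{\delta-1}{K}\le\frac12$, which is precisely where $\delta\le K/2$ enters (note $\delta-1<\delta\le K/2$). Beyond tracking these two issues there is no real obstacle.
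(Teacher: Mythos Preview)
Your proposal is correct and follows essentially the same approach as the paper: chain Fact~(1), the Gilbert--Varshamov bound (Lemma~\ref{lem:count_of_balls}), and the entropy bound on the Hamming-ball volume (Lemma~\ref{lem:volume_of_Hamming_ball}) with radius $\delta-1$, then take logarithms. Your treatment is in fact more careful than the paper's, since you explicitly check the hypothesis $p=(\delta-1)/K\le 1/2$ of Lemma~\ref{lem:volume_of_Hamming_ball} and clarify that the resulting inequality is a \emph{sufficient} (conservative) condition rather than a consequence of $2^{H(\pmb{L})}\le A(K,\delta)$.
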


\begin{proof}\renewcommand{\qedsymbol}{}\label{prf:dissimilar_hamming_distance_satisfitation}
By the the Lemma~\ref{lem:volume_of_Hamming_ball} and to make
Eq.~(\ref{eq:fact1_further_formula}) always hold, we have:
\begin{equation}\nonumber
\begin{aligned}
2^{H(\mathbf{\pmb{L}})}\le \frac{2^K}{2^{H(\frac{\delta-1}{K})K}} \le\frac{2^K}{\sum_{i=0}^{\delta-1}\binom{K}{i}}\le A(K, \delta)\\
\end{aligned}
\end{equation}
whence $H(\frac{\delta-1}{K})\leq 1-\frac{H(\pmb{L})}{K}$.
\end{proof}

In experiments, we can estimate
$H(\pmb{L})$$=H(\pmb{L}_1,$$\pmb{L}_2,$$\dots,$$\pmb{L}_C)$ $=$
$\sum_{i=1}^{C}H(\pmb{L}_i)$ by assuming independence between tags
for simplicity and each tag $\pmb{L}_i$ follows a Bernoulli
distribution, i.e., $\pmb{L}_i\sim B(\theta_i)$. The estimation of
$\theta_i$ can be
$\hat{\theta_i}=\frac{1}{N}\sum_{n=1}^N\mathbb{I}(l_{n,i}=1)$, $l\in
\mathbf{L}$, where $\mathbb{I}\{*\}$ is indicator function. Then,
given code length $K$ and $\delta$, we can investigate whether the
value of $\delta$ is larger than the effective range. In our
experiments, we found that the upper-bound of the effective $\delta$
is always less than $K/2$, which is preferred set by previous
works\footnote{The bit balance, firing each bit as 1 or -1 with
50\%, is the special case of $\delta=K/2$, which can be derived by
the Maximum Entropy Principle but ignores the limited coding ability
of codes.}.

\noindent {\bf 2) Lower bound.} Let $H_*$ denote $H(\pmb{S_{*,:}})$,
by the fact (2) we have $\delta\ge \mathop{\max}_{*=1:N}\{H_*\}$.
However, as $S$ is sparse, we could relax  $\delta\ge
\mathop{\max}_{*=1:N}\{H_*\}$ by making the $\delta$ be larger than
most $H_*$ to ensure $\delta$ bits can encode the semantic neighbors
of each sample with a certain probability. By Chebyshev's
Inequality, we have:
\begin{equation}
\label{eq:lower_bound1}
\begin{aligned}
P\{H_*\le\delta\}\ge 1-\frac{D(H_*)}{(\delta-E(H_*))^2}
 \end{aligned}
\end{equation}
where $E(H_*)=\frac{1}{N}\sum_{i=1}^{N}H_i$,
$D(H_*)=\frac{1}{N}\sum_{i=1}^{N}(H_i-E(H_*))^2$. If
$P\{H_*\le\delta\}= p$, $1>p>1/2$, we have:
\begin{equation}
\label{eq:lower_bound}
\begin{aligned}
\delta \ge \sqrt{\frac{D(H_*)}{1-p}}+E(H_*)
 \end{aligned}
\end{equation}
We estimate $H_i=-\sum_{j=1}^{|l_i|}p(s_{i}=\frac{j}{|l_i|})\log
p(s_i=\frac{j}{|l_i|})\le|l_i|$,
$p(s_i=\frac{j}{|l_i|})=\frac{\sum_*\mathbb{I}(S_{i,*}=j/|l_i|)}{\sum_*
\mathbb{I}(|l_i\cap l_*|=S_{i,*}\cdot|l_i|)}$, $s_i\in S_{i,:}$. For
simple computation, we substitute $H_i$ with $|l_i|$ in experiments.

Combing Eqs.~(\ref{eq:upper_bound}) and (\ref{eq:lower_bound}), we
obtain the final bounds of effective robust parameter $\delta$. In
practice, we can pre-determine the range of high quality $\delta$ by
substituting the corresponding value into Eq.~(\ref{eq:upper_bound})
and (\ref{eq:lower_bound}) when given training set information and
avoid the burden of seeking proper value for effective learning. It
is worth note that the performance of different $\delta$ within
bounds is not significantly different, setting any value in bounds
is OK in practice. Furthermore, if we want to find optimal $\delta$,
our theoretical results can guarantee the performances within bound
are better than those without bound, which significantly reduces the
search range of optimal hyper-parameter. Therefore, setting the
special value within the bound is not difficult. The experimental
results in Sec.~\ref{sec:exp3} demonstrate the effectiveness of the
derived bounds.

\subsection{Network Architecture}
The architecture of RMSH is illustrated in Fig.~\ref{fig:method},
which seamlessly integrates three parts: the image hashing, the text
hashing, and the pseudo-codes networks.

{\bf Deep Hashing Networks.} We build image and text hashing as two
deep neural networks via adding two fully-connected layers with tanh
function on the top feature layer of commonly-used modality-specific
deep models, e.g., CNN model ResNet for image modality, BOW or
sentence2vector for text modality. These two layers as the hash
functions transform the feature $x_i$, $y_i$ into binary-like codes
$z_i^{(x)}$, $z_i^{(y)}$$\in$$\mathbb{R}^K$. Then, we obtain the
hash codes of modality $*$ by $b_i^{(*)}=sgn(z_i^{(*)})$, where
$sgn(z)$ is the sign function.

{\bf Pseudo-codes Network (PCN).} The input data are always
imbalanced due to the sparsity problem of multi-labels, which
results in the difficulty of capturing complicated similarity
structure across modality. To handle this problem, inspired by
work~\cite{AlfassyKASHFGB19}, we propose pseudo-codes networks to
manipulate the binary-like codes at the semantic level for
generating codes of rare semantics, which are mixed with original
codes to explore complicated similarity structure. We defined two
types of code operations, i.e., union and intersection, as two
fully-connected networks:
\begin{equation}
\begin{aligned}
  z_{1\oplus2}&= f_u(z_1,z_2) \equiv tanh(W_u[z_1,z_2])\\
  z_{1\otimes2}&= f_t(z_1,z_2) \equiv tanh(W_t[z_1,z_2])\\
\end{aligned}
\end{equation}
where $[\cdot,\cdot]$ denotes the concatenation operation. $W_u,W_t$
$\in \mathbb{R}^{K\times 2K}$ are weights to be learnt.
$z_{1\oplus2}=f_{u}(z_1,z_2)$ with label $l_u=l_1\oplus l_2=l_1\cup
l_2$. $z_{1\otimes2}=f_{t}(z_1,z_2)$ with label $l_{t}=l_1\otimes
l_2$, defined as $l_1\cap l_2$.

\subsection{Objective Function}
To preserve multilevel semantic similarity and fully exploit the
space between dissimilar points in Hamming space, we formulate the
goal in Sec~\ref{sec:problem} as follows: for $\forall$ $b_i$,
$b_j$, $b_k$, 1) if $S_{i,j}$$>$$S_{i,k}$$>0$, then
$d_H(b_i,b_k)$$-$$d_H(b_i,b_j)$$\ge$$(S_{i,j}$$-$$S_{i,k})$$\cdot\delta$.
2) if $S_{i,j}$$=$$0$, then $d_H(b_i,b_j)$$\ge$$\delta$. The first
goal is to preserve the ranking information of similar points by an
adaptive margin. The second goal is to keep the minimum distance
between dissimilar points. We couple these two goals in a scheme of
margin-adaptive triplets, that is:
\begin{equation}
\label{eq:triplet_definition}
\begin{aligned}
 \mathcal{L}_{t}(b_i,b_j,b_k)&=y_{i,j}\cdot y_{i,k}\cdot[d_H(b_i,b_j)-d_H(b_i,b_k)+\alpha]_{+}\\
 &+(1-y_{i,j})[\delta-d_H(b_i,b_j)]_{+}+(1-y_{i,k})[\delta-d_H(b_i,b_k)]_{+}\\
\end{aligned}
\end{equation}

where $[\cdot]_+=max\{0,\cdot\}$, $y_{i,j}=\mathbb{I}(S_{i,j}>0)$,
$y_{i,k}=\mathbb{I}(S_{i,k}>0)$, $\mathbb{I}$ is indication
function, $\alpha=\delta(S_{i,j}-S_{i,k})$, is adaptive to the
discrepancy of similarity level and $\delta$ controls the distance
of dissimilar points. Given a triplet of multi-modal data
$x_{\{1,2,3\}}$, $y_{\{1,2,3\}}$ and their labels $l_{\{1,2,3\}}$,
we can obtain five hash codes for each modality, i.e.,
$b_{\{1,2,3\}}$, and $b_4=sgn(f_u(z_1,z_2))$,
$b_5=sgn(f_t(z_1,z_2))$. Note that in principle we can generate more
pseudo-codes using the PCN network but for simplicity and w.l.o.g.,
here only a set involving two codes are used. We partition them into
the intra-modality triplets $b_{\{1,2,3\}}$, $b_{\{1,2,4\}}$,
$b_{\{1,2,5\}}$, and the inter-modality triplets $\{b_1^{(y)},
b_{\{2,3\}}^{(x)}\}$, $\{b_2^{(y)}, b_{\{1,3\}}^{(x)}\}$,
$\{b_3^{(y)}, b_{\{1,2\}}^{(x)}\}$, each of which reflects some
aspect of the similarity structure and corresponds to a separate
loss term in the final loss.

To mine the effective information in each triplets quickly, we
select one point $b_{*}$ from each triplet with most tags as the
reference point and re-order other points as
$\{b_*,b_{*,1},b_{*,2}\}$, where $b_{*,1}$ is the one that is more
similar to $b_{*}$ than the other $b_{*,2}$. Then, we substitute
them into Eq.~(\ref{eq:triplet_definition}) and obtain the used
triplet loss as
$\mathcal{L}_{t}(b_{\{1,2,3\}})=\mathcal{L}_{t}(b_*,b_{*,1},b_{*,2})$,
where $\alpha=\frac{|l_*\cap l_{*,1}|-|l_*\cap
l_{*,2}|}{|l_*|}\cdot\delta$.

The final triplet loss for one modality is defined as:
\begin{small}
\begin{equation}
\label{eq:triplet_loss}
\begin{aligned}
 \mathcal{L}_{tr}(&b_{\{1,2,3\}}^{(x)})= \lambda_1(\sum_{i=3}^{5}\mathcal{L}_{t}(b^{(x)}_{\{1,2,i\}}))+\lambda_2(\mathcal{L}_{t}(b_1^{(y)},b_{\{2,3\}}^{(x)})\\
 &+\mathcal{L}_{t}(b_2^{(y)},b_{\{1,3\}}^{(x)})+\mathcal{L}_{t}(b_3^{(y)},b_{\{1,2\}}^{(x)}))\\
 \end{aligned}
\end{equation}
\end{small}
where $\lambda_1$, $\lambda_2$ control the weight of intra-modal and
inter-modal similarity loss, respectively.

Besides, we adopt weighted cross-entropy loss to ensure that each
individual hashing codes consistent with its own semantics,
especially for the pseudo-codes from fusion, and the loss is defined
as follows:
\begin{small}
\begin{equation}
\label{eq:classification_loss}
\begin{aligned}
 \mathcal{L}_{cl}(b)=-\sum_{j=1}^{C}\big(w_{p}\cdot l_j\log(\hat{l_j})+(1-l_j)\log(1-\hat{l_j}) \big)
\end{aligned}
\end{equation}
\end{small}
where $\hat{l}$ is the predicted value of RMSH, and $w_{p}$ is the
weight of positive samples.

Finally, the overall objective for N training triplets $\{b_{i1},$
$b_{i2},$$b_{i3}\}_{i=1}^{N}$ of RMSH is defined as follows:
\begin{equation}
\label{eq:final_loss}
\begin{aligned}
 \mathop{min}_{\theta}\mathcal{L}&=\sum_{i=1}^{N}\big(\sum_{j=1}^{3}\mathcal{L}_{cl}(b^{(x,y)}_{ij})+\mathcal{L}_{tr}(b^{(x,y)}_{i\{1,2,3\}})\\
 &+\lambda_{3}(\mathcal{L}_{cl}(b^{(x,y)}_{i4})+\mathcal{L}_{cl}(b^{(x,y)}_{i5}))\big)\\
 &s.t.\ b^{(x)},b^{(y)}\in \{-1,1\}^{K}\\
 \end{aligned}
\end{equation}
where $\lambda_3$ balances the classification loss of pseudo-codes,
$ \theta$ is the parameter of image hashing, text hashing, and PCN.
\subsection{Optimization}
Since the Eq.~(\ref{eq:final_loss}) is a discrete optimization
problem, we follow the work~\cite{Wu2017Deep} to solve this problem.
We relax $b$ as $z$ by introducing quantization error $||z-b||_2^2$,
and substitute Hamming distance with Euclidean distance, i.e.
$d(z_1,z_2)=||z_1-z_2||_{2}^{2}$. The Eq.~(\ref{eq:final_loss}) is
rewritten as follows:
\begin{equation}
\label{eq:final_loss1}
\begin{aligned}
 \mathop{min}_{\theta,b}\mathcal{L}&=\sum_{i=1}^{N}\big(\sum_{j=1}^{3}L_{cl}(z^{(x,y)}_{ij})+\lambda_{3}(\sum_{j=4}^{5}L_{cl}(z^{(x,y)}_{ij}))\\
 &+L_{tr}(z^{(x,y)}_{i\{1,2,3\}})+\lambda_4\sum_{j=1}^{3}||z^{(x,y)}_{ij}-b_{ij}||_2^2\big)\\
&s.t.\ b \in \{-1,1\}^{K}\\
 \end{aligned}
\end{equation}
where $z^{(*)}_{i4}=f_u(z^{(*)}_{i1},z^{(*)}_{i2})$,
$z^{(*)}_{i5}=f_t(z^{(*)}_{i1},z^{(*)}_{i2})$. $\lambda_{4}$
controls the weight of quantization error. In the training phase, we
let $b_i$ $=b_i^{(x)}$ $=b_i^{(y)}$ for better performance. We adopt
an alternating optimization strategy to learn $\theta$ and $b$ as
follows.

{\bf 1) Learn $\theta$ with $b$ fixed.} When $b$ is fixed, the
optimization for $\theta$ is performed using stochastic gradient
descent based on Adaptive Moment Estimation (Adam).

{\bf 2) Learn $b$ with  $\theta$ fixed.} When $\theta$ is fixed, the
problem in Eq.~(\ref{eq:final_loss1}) can be reformulated as
follows:
\begin{equation}
\label{eq:final_loss2}
\begin{aligned}
 \mathop{max}_{b}\ &\lambda_4\sum_{i=1}^{N}\big(b_i^T(z^{(x)}_{i}+z^{(y)}_{i})\big)\\
 &s.t.\ b \in \{-1,1\}^{K}\\
 \end{aligned}
\end{equation}
We can find that the binary code $b_i$ should keep the same sign as
$z^{(x)}_{i}+z^{(y)}_{i}$, i.e., $b_i=sgn(z^{(x)}_{i}+z^{(y)}_{i})$.

\section{Experiments and Discussions}
\label{sec:exp}
\subsection{Datasets}
{\bf Microsoft COCO}~\cite{Lin2014Microsoft} contains 82,783
training and 40,504 testing images. Each image is associated with
five sentences (only the first sentence is used in our experiments),
belonging to 80 most frequent categories. We use all training set
for training and sample 4,956 pairs from the testing set as queries.

{\bf MIRFLICKR25K}~\cite{HuiskesL08mir} contains 25,000 image-text
pairs. Each point associates with some of 24 labels. We remove pairs
without textual tags or labels and subsequently get 18,006 pairs as
the training set and 2,000 pairs as the testing set. The
1386-dimensional bag-of-words vector gives the text description.

{\bf NUS-WIDE}~\cite{civr09nuswide} contains 260,648 web images,
belonging to 81 concepts. After pruning the data without any label
or tag information, only the top 10 most frequent labels and the
corresponding 186,577 text-image pairs are kept. 80,000 pairs and
2,000 pairs are sampled as the training and testing sets,
respectively. The 1000-dimensional bag-of-words vector gives the
text description. We sampled 5,000 pairs of the training set for
training.

\begin{table*}
\caption{Comparison (NDCG@500) of different cross-modal hashing
methods on three datasets with different code length. The best
result is shown in boldface.} \label{tab:tab_compare_cross_hash}
\resizebox{\textwidth}{!}{%
\begin{tabular}{|c|c|c|c|c|c|c|c|c|c|c|c|c|c|}
\hline \multirow{2}{*}{Task} & \multirow{2}{*}{Method} &
\multicolumn{4}{c|}{MIRFLICKR25K} & \multicolumn{4}{c|}{NUS-WIDE} &
\multicolumn{4}{c|}{MS COCO} \\ \cline{3-14}
 &  & 16bits & 32bits & 64bits & 128bits & 16bits & 32bits & 64bits & 128bits & 16bits & 32bits & 64bits & 128bits \\ \hline
\multirow{8}{*}{I $\to$ T}& CMFH & 0.2926 & 0.3154 & 0.3193 & 0.3282 & 0.4875 & 0.5012 & 0.5270 & 0.5394 & 0.2224 & 0.2571 & 0.2899 & 0.3098 \\
 & UCH & 0.3017& 0.3114&0.3197 &0.3362 &0.5202 &0.5175  &0.5227 & 0.5175&0.2239 &0.2588 &0.2757  & 0.3143 \\
 & SePH & 0.3310 & 0.3522 & 0.3667 & 0.3811 &0.5726 & 0.5809 &0.5867  &0.6043  & 0.2266 & 0.2681 & 0.3113 & 0.3375 \\
 & DCMH & 0.3857 & 0.4013 & 0.4162 & 0.4046 & 0.5942 & 0.6239 & 0.6353 & 0.6392 & 0.2257 & 0.2661& 0.3089 & 0.3147 \\
 & TDH & 0.4379 & 0.4491 & 0.4491 & 0.4458 & 0.6196 & 0.6242 & 0.6305 & 0.6324 &0.2447 & 0.2692 &0.2946  &0.3138  \\
 & SSAH & 0.4203 & 0.4392 & 0.4626 & 0.4753 & 0.6026 & 0.6331 & 0.6263 & 0.6144 & 0.1955 & 0.2797 & 0.3157 & 0.3917 \\
 & SVHN & 0.4186 & 0.4354 & 0.4675 & 0.4895 & 0.6039 & 0.6324 & 0.6486 & 0.6501 &0.2526  & 0.2930 & 0.3270 & 0.3811 \\ \cline{2-14}
 & RMSH & \textbf{0.4592} & \textbf{0.4940} & \textbf{0.5180} & \textbf{0.5183} & \textbf{0.6148} & \textbf{0.6401} & \textbf{0.6497} & \textbf{0.6574} & \textbf{0.3037} & \textbf{0.3724} & \textbf{0.3855} & \textbf{0.3998} \\ \hline
\multirow{8}{*}{T $\to$ I} & CMFH & 0.2857 & 0.3079 & 0.3115 & 0.3138 & 0.4642 & 0.4775 & 0.4998 & 0.5091 & 0.1982 & 0.2182 & 0.2398 & 0.2539 \\
 & UCH & 0.3027 &0.3000 &0.3157  &0.3051  & 0.4923 &0.5103 &0.5008 &0.5232 & 0.1895& 0.2356 &0.2569  & 0.2687\\
 & SePH & 0.3085 & 0.3245 & 0.3168 & 0.3582 & 0.5264 & 0.5285 & 0.5245 &0.5337  & 0.1968 & 0.2367 & 0.2627 & 0.2839 \\
 & DCMH & 0.3719 & 0.3819 & 0.3877 & 0.3894 & 0.5375 & 0.5451 & 0.5438 & 0.5463 & 0.1885 & 0.2122 & 0.2544& 0.2601 \\
 & TDH & 0.3789 & 0.4084 & 0.3924 & 0.3937  & 0.5329 &0.5340  & 0.5368 & 0.5306 & 0.2249 & 0.2420 & 0.2521 &0.2712 \\
 & SSAH & 0.3648 & 0.3815 & 0.3710 & 0.3707 & 0.5262 & 0.5428 & 0.5583 & 0.5375 & 0.1870 & 0.2382 & 0.2469 & 0.3114 \\
 & SVHN & 0.3985 & 0.4434 & 0.4483 & 0.4604 & 0.5486 & 0.5702 & 0.5831 & 0.5885 & 0.2313 & 0.2664 & 0.2780 & 0.3163 \\ \cline{2-14}
 & RMSH & \textbf{0.4379} & \textbf{0.4533} & \textbf{0.4562} & \textbf{0.4628} & \textbf{0.5744} & \textbf{0.5921} & \textbf{0.5900} & \textbf{0.5999} & \textbf{0.2897} & \textbf{0.3080} & \textbf{0.3050} & \textbf{0.3175} \\ \hline

\end{tabular}%
}
\end{table*}

\subsection{Evaluation protocol and Baselines}
{\bf Evaluation protocol.} We perform cross-modal retrieval with two
tasks. (1) {\bf Image to Text (I $\to$ T)}: retrieve relevant data
in the text training set using a query in the image test set. (2)
{\bf Text to Image (T $\to$ I)}: retrieve relevant data in the image
training set using a query in the text test set. Particularly, to
evaluate the retrieval performance of multi-label data, we adopt the
Normalized Discounted Cumulative Gain (NDCG)~\cite{Kalervo2000IR} as
the performance metric, which is defined as follows, since the NDCG
can evaluate the ranking by penalizing errors according to the
relevant score and position.
\begin{equation}
\label{eq_ndcg_metric}
  NDCG@p=\frac{1}{Z}\sum_{i=1}^{p}\frac{2^{r_i}-1}{\log(1+i)}
\end{equation}
where Z is the ideal $DCG@p$ and calculated form the correct ranking
list, and $p$ is the length of list. $r(i)=|l_q\cap l_i |$ denotes
the similarity between the $i$-th point and the query, following the
definition in work~\cite{zhao2015deep}. $l_q$ and $l_i$ denote the
label of the query and $i$-th position point.

Besides, the commonly-used precision-recall curve is used as a
metric to evaluate the performance, where we consider two points are
relevant if they share at least one tag.

\noindent {\bf Baselines.} We compare our proposed RMSH with six
cross-modal hashing methods {\bf CMFH}~\cite{Ding2014Collective},
{\bf UCH}~\cite{li2019coupled}, {\bf SePH}~\cite{Lin2015Semantics},
{\bf DCMH}~\cite{Jiang2017Deep}, {\bf TDH}~\cite{DengCLGT18}, {\bf
SSAH}~\cite{LiDL0GT18}, {\bf SVHN}~\cite{HuWZP19}. For a fair
comparison, all shallow methods take the deep off-the-shelf features
extracted from ResNet-152~\cite{He2015Deep} pre-trained on ImageNet
as inputs, and all deep models are implemented carefully with the
same CNN sub-structures for image data and the same multiple
fully-connect layers for textual data. The hyper-parameters of all
baselines are set according to the original papers or experimental
validations.

\subsection{Implementation details.} Our RMSH method is
implemented with Tensorflow\footnote{https://www.tensorflow.org/}.
We use ResNet~\cite{He2015Deep} pre-trained on ImageNet as the CNN
model in image hashing to extract features of image modality. For
the MS COCO dataset, the 4800-dimensional Skip-thought
vector~\cite{KirosZSZUTF15} gives the sentence description. The
structure of two hashing layers is $D^{(*)}\to 1024\to K$-bit
length, where $D$ is the dimension of * modality's feature. For fast
training, we only optimize the hashing layers with the pre-trained
CNN sub-structures (no fine-tuned on the target dataset for a fair
comparison with other shallow methods) fixed for all deep methods.
In all experiments, we set the batch size to 128, epochs to 50,
$\lambda_1$, $\lambda_2$, $\lambda_3$, and $\lambda_4$ to 0.01, 0.1,
0.1, 0.1, learning rate to 0.001, $\delta$ by the derived results in
Sec.~\ref{sec:emm}, $w_p$ to 20 for NUS-WIDE and MIRFLICKR25K
datasets, 30 for MS COCO dataset.

\begin{figure*}
\center
\begin{tabular}{ccc}
\includegraphics[scale=.45]{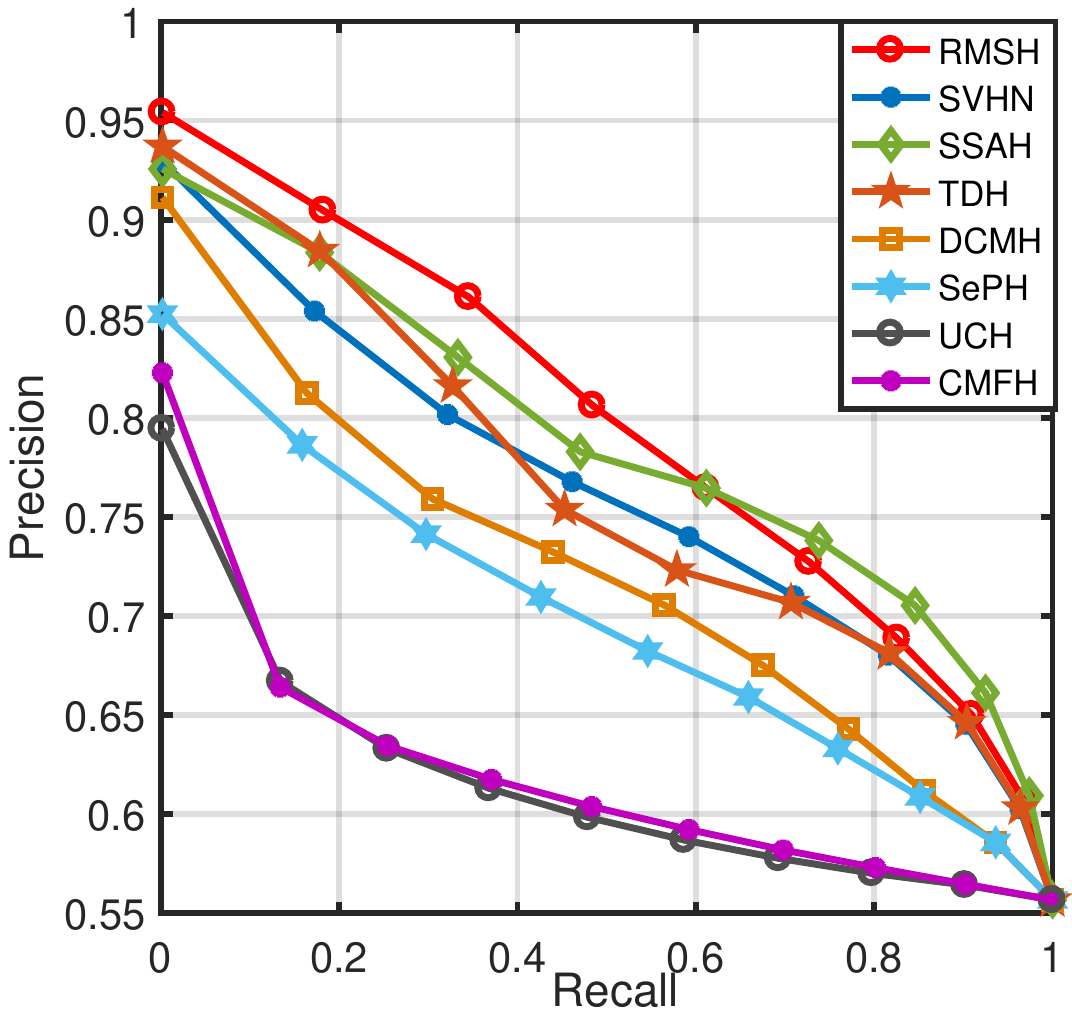}&
\includegraphics[scale=.45]{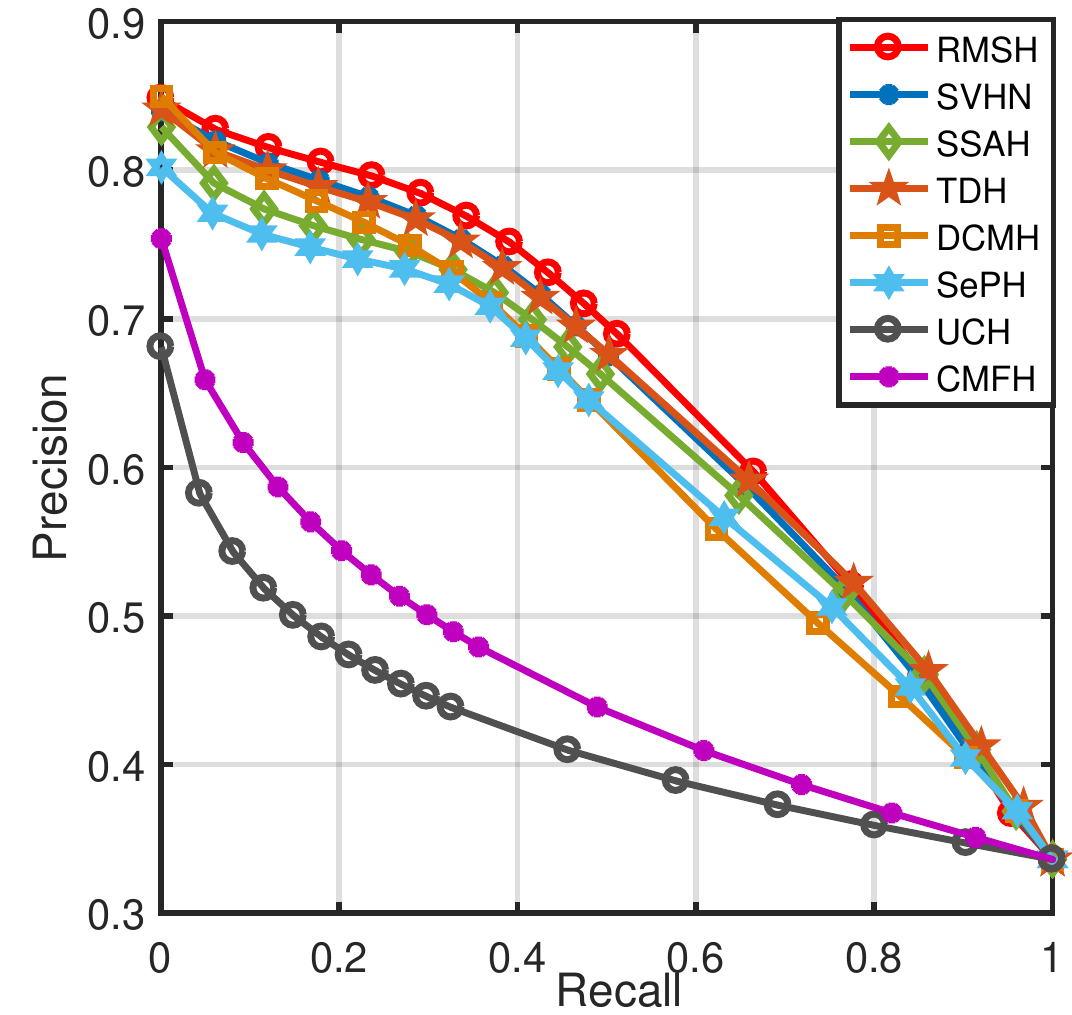}&
\includegraphics[scale=.45]{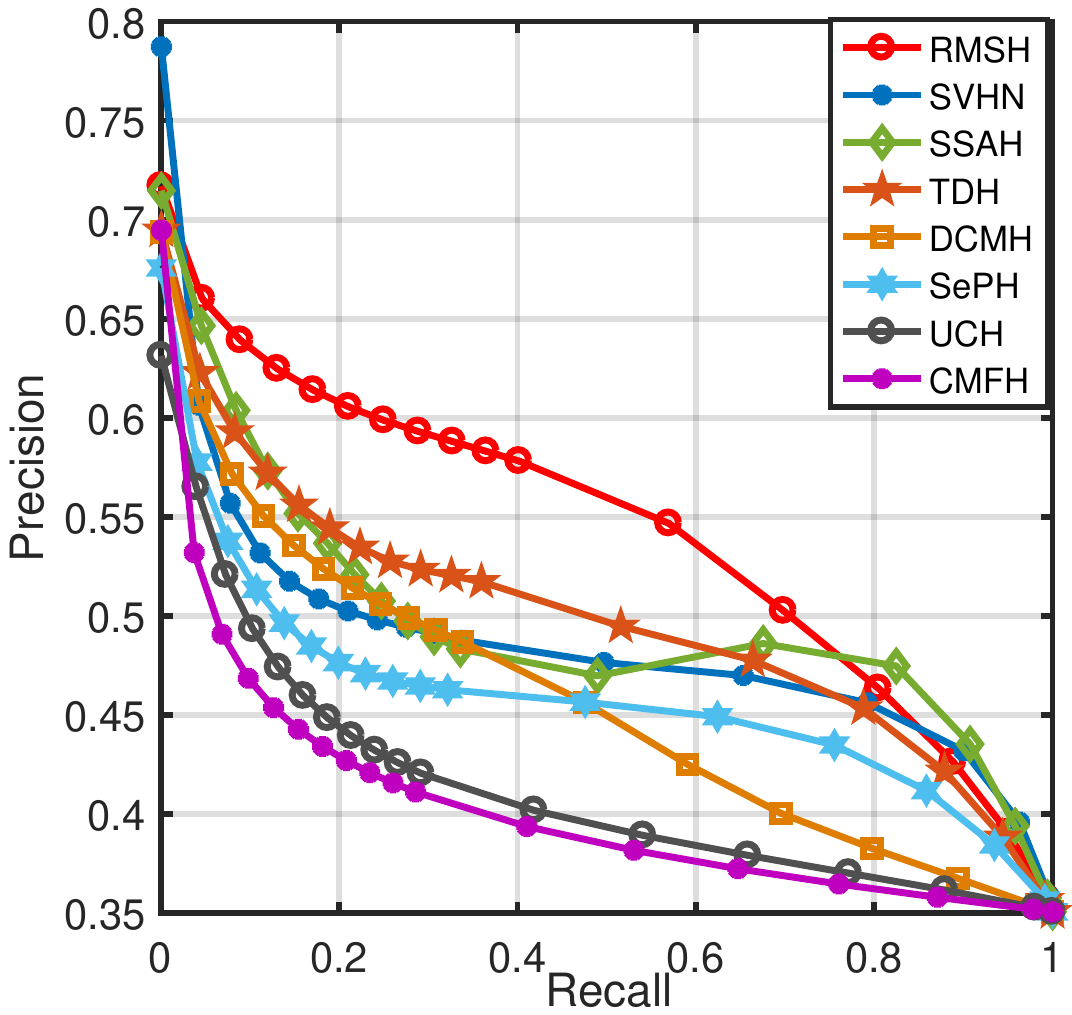}\\
@MIRFLICKR25K (I$\to$T)& @NUS-WIDE (I$\to$T) & @MS COCO (I$\to$T)\\
\includegraphics[scale=.45]{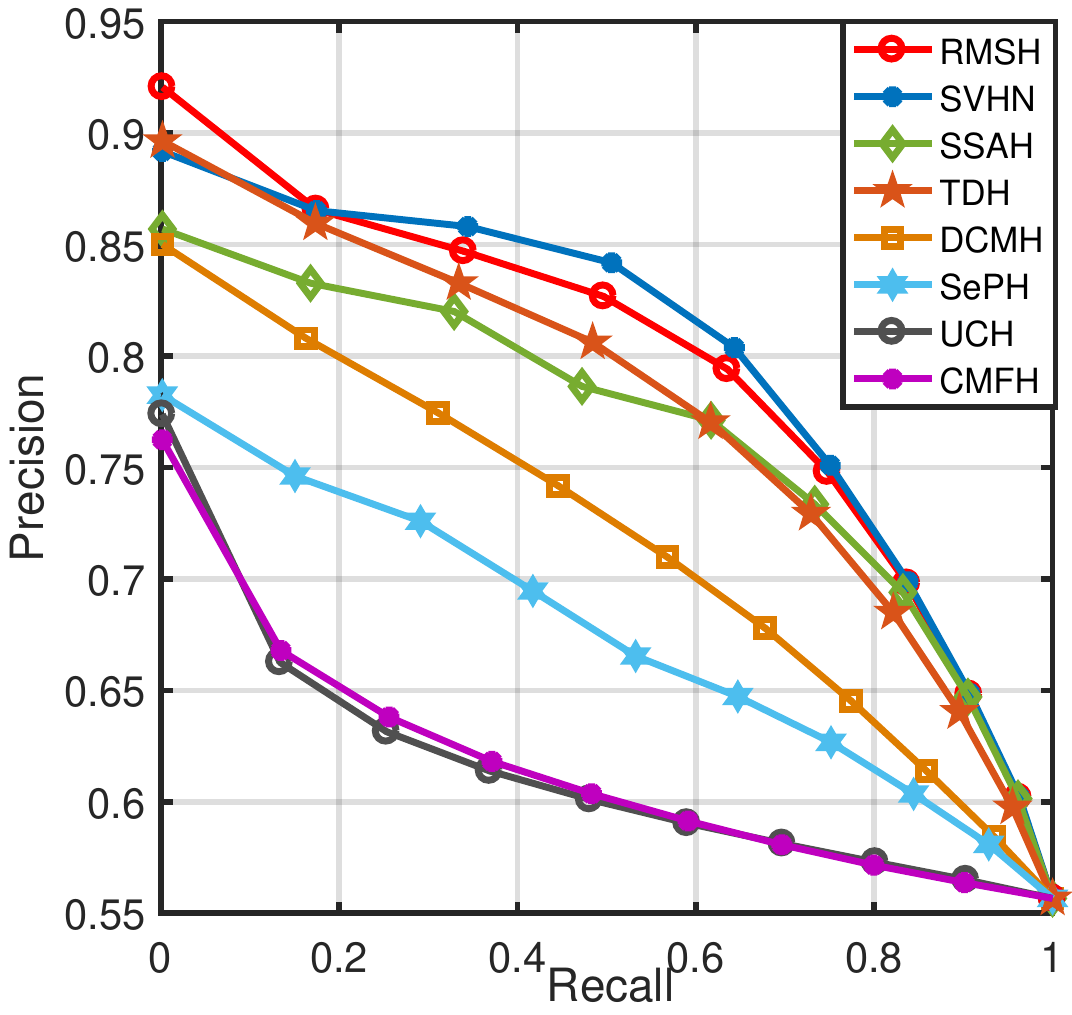}&
\includegraphics[scale=.45]{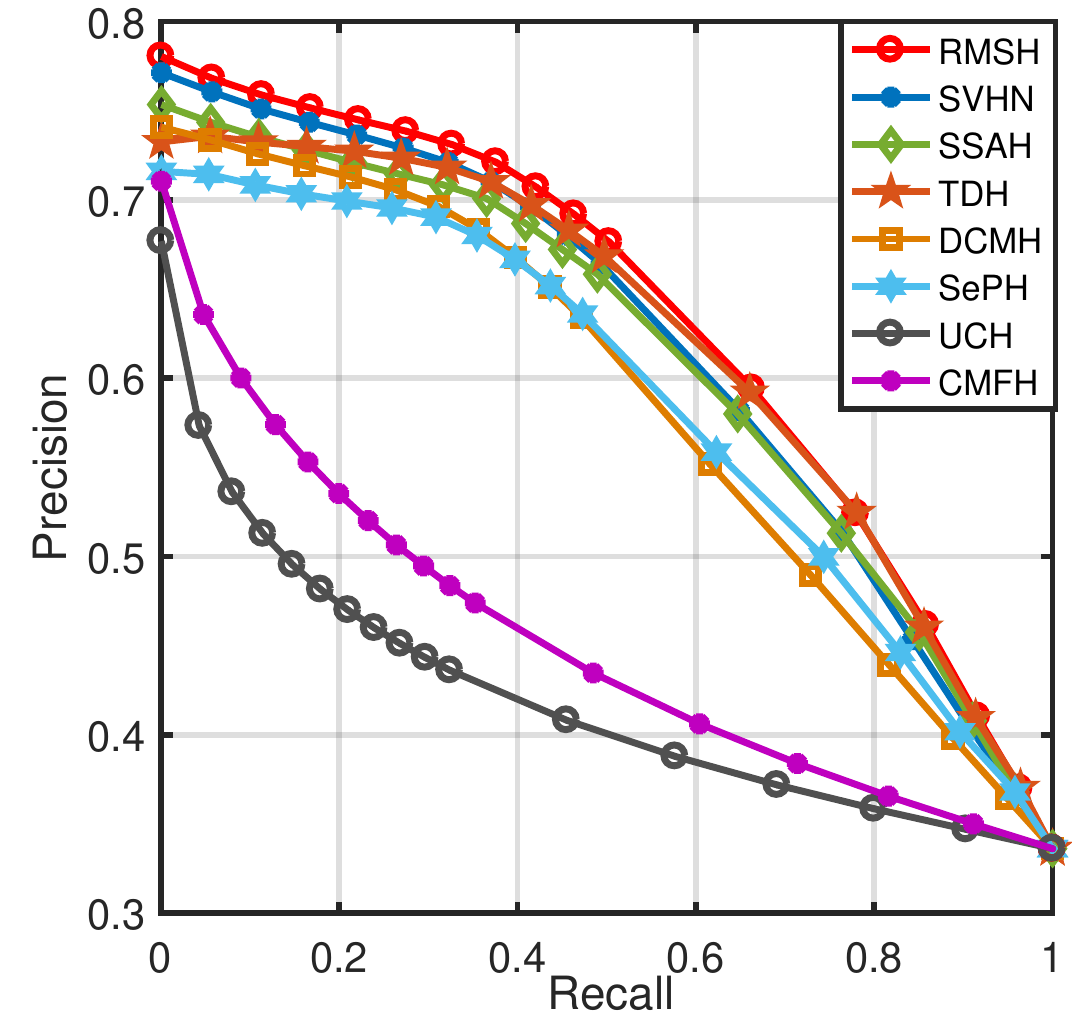}&
\includegraphics[scale=.45]{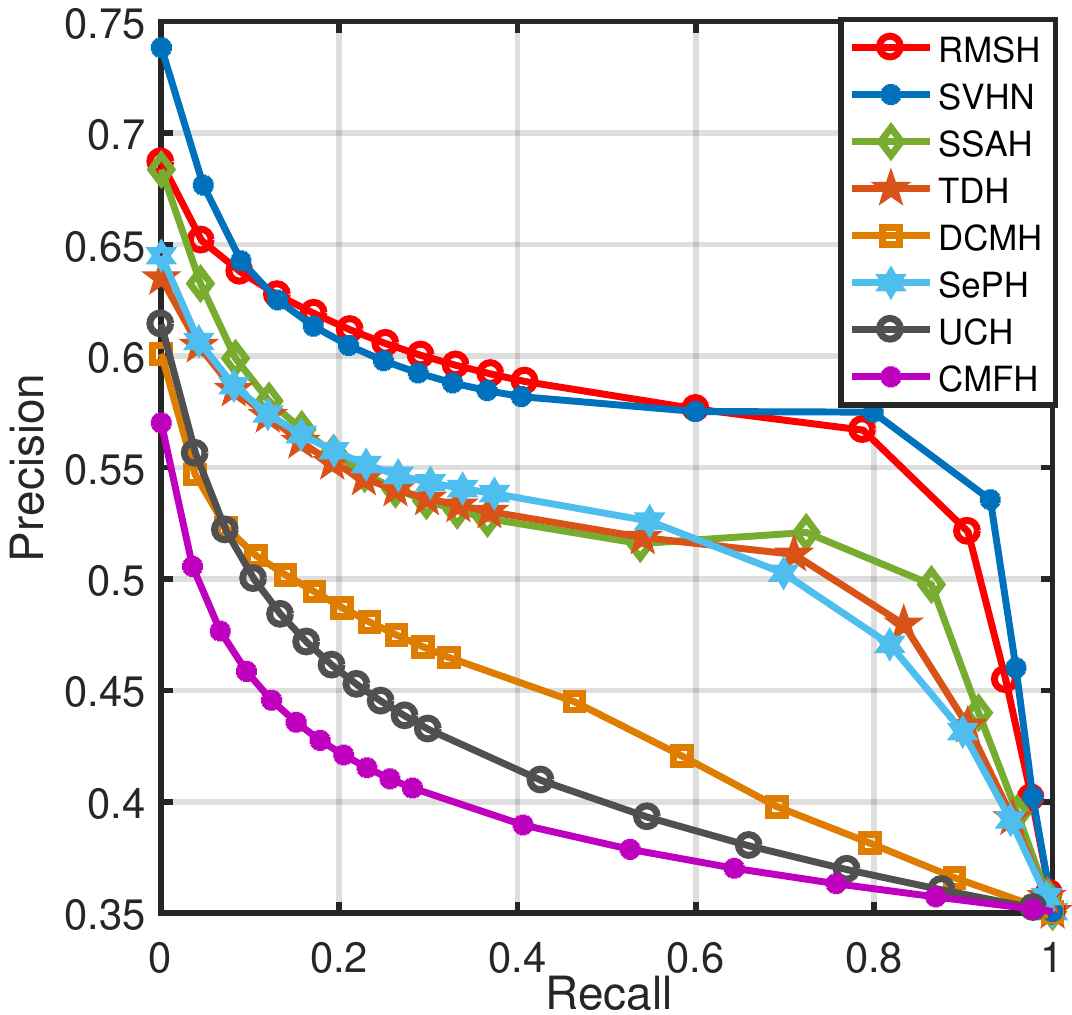}\\
@MIRFLICKR25K (T$\to$I)& @NUS-WIDE (T$\to$I)& @MS COCO (T$\to$I)\\
\end{tabular}
\caption{Precision-Recall curves on MIRFLICKR25K, NUS-WIDE, and MS
COCO datasets.} \label{fig:campare_of_PR}
\end{figure*}

\begin{figure}
\center
\includegraphics[scale=0.35]{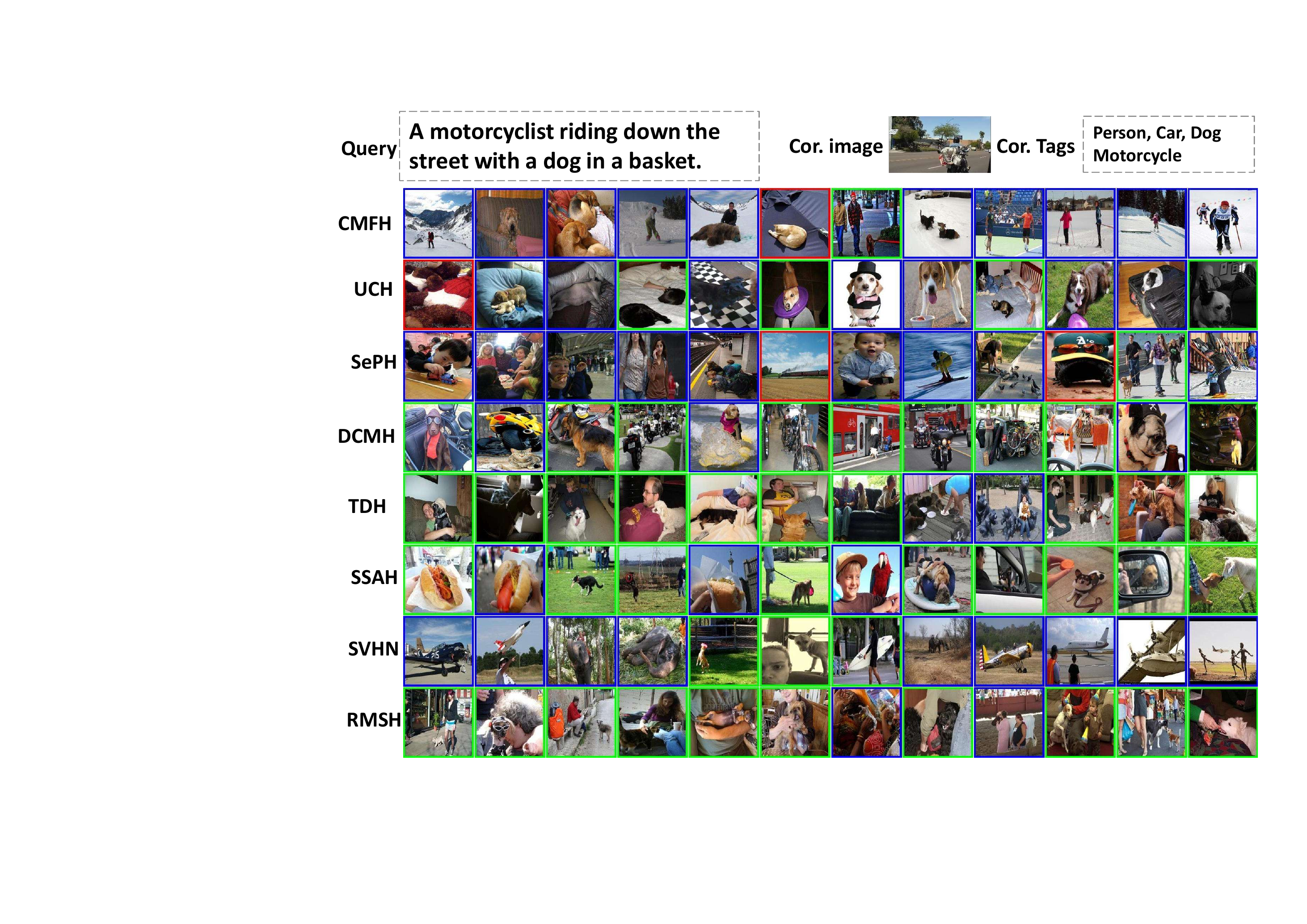}\\
\caption{Retrieval examples of 'Text to Image'. Red border denotes
irrelevant; blue denotes sharing one tag with the query, green
denotes sharing two tags with the query at least.}
\label{fig:retrieval_example_coco_TI}
\end{figure}

\begin{figure}
\center
\includegraphics[scale=0.35]{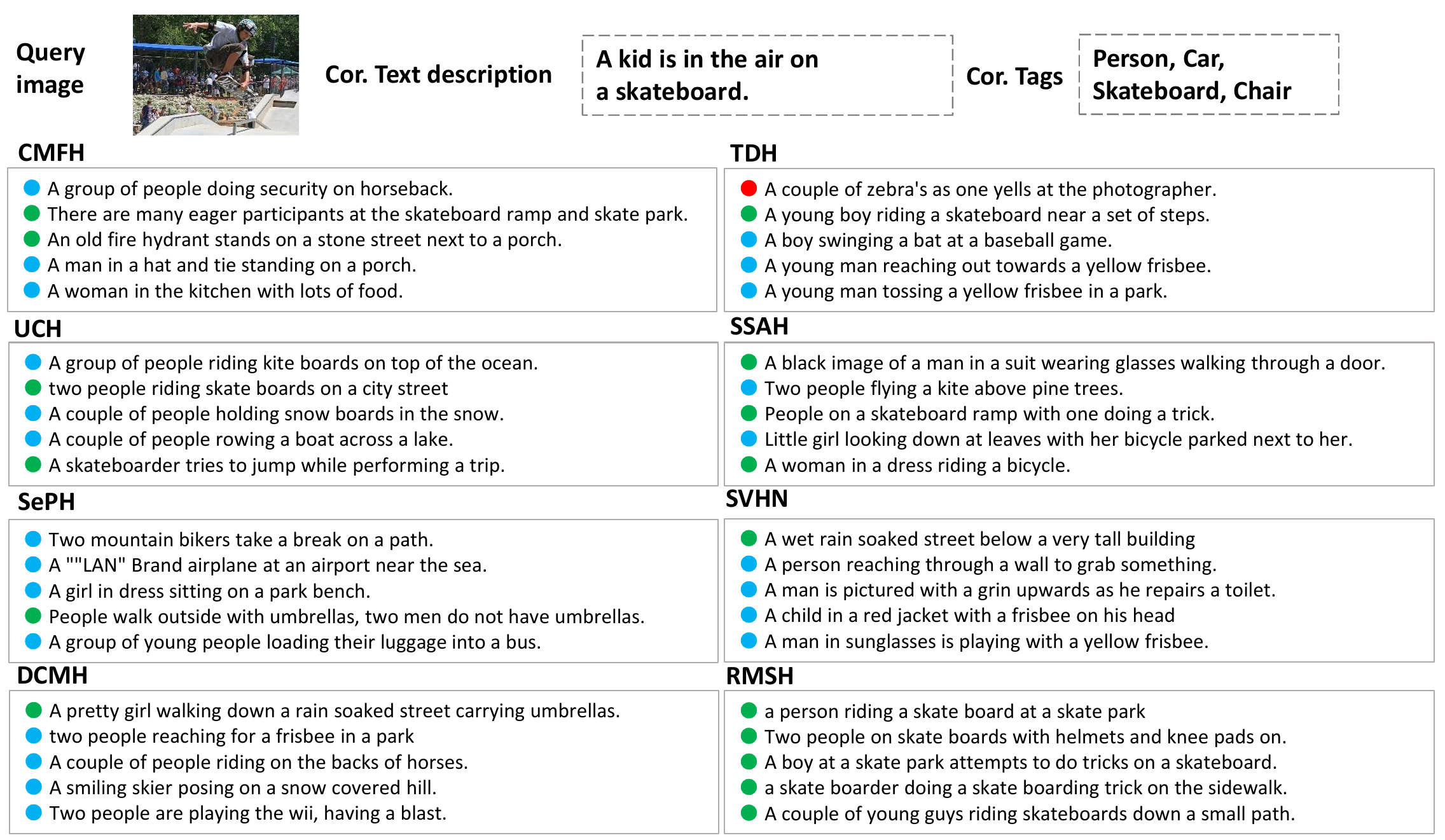}
\caption{Retrieval examples of 'Image to Text'. Red dot denotes
irrelevant; blue denotes sharing one tag with the query, green
denotes sharing two tags with the query at least.}
\label{fig:retrieval_example_coco_IT}
\end{figure}

\begin{figure}[t]
\center
\begin{tabular}{cc}
\includegraphics[scale=.4]{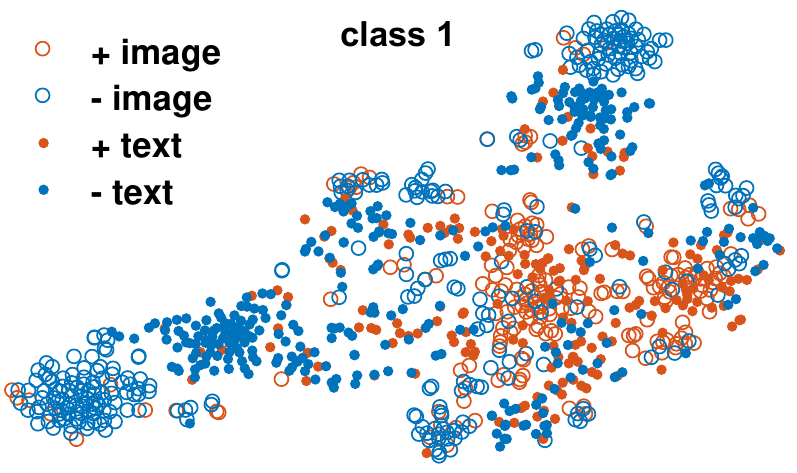}&
\includegraphics[scale=.4]{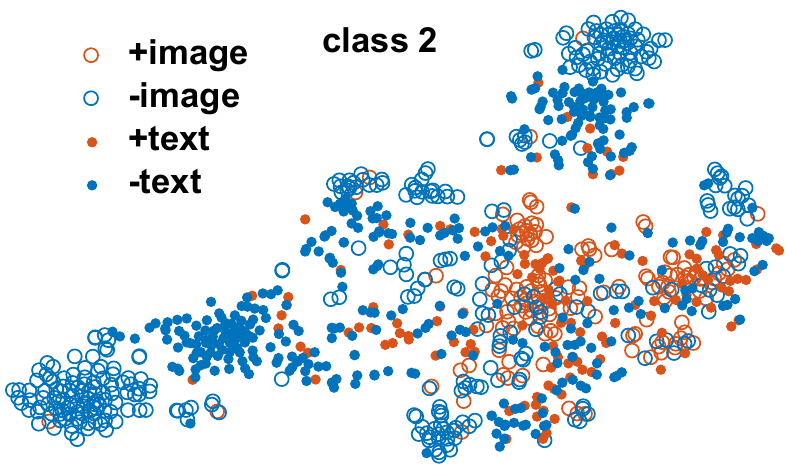}\\
\includegraphics[scale=.4]{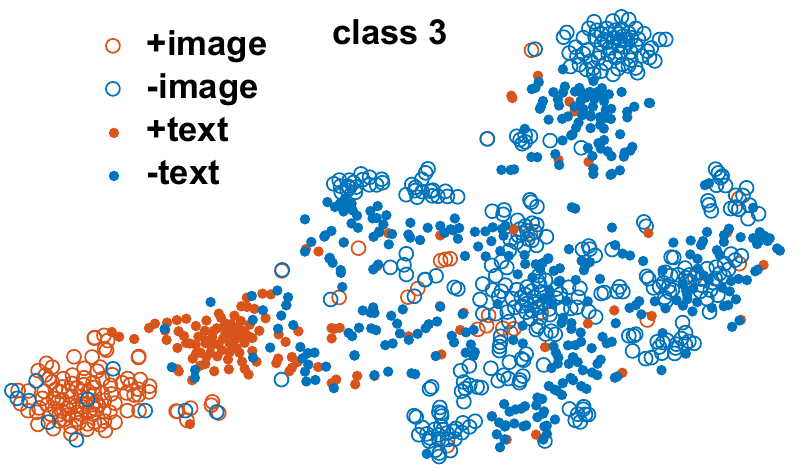}&
\includegraphics[scale=.4]{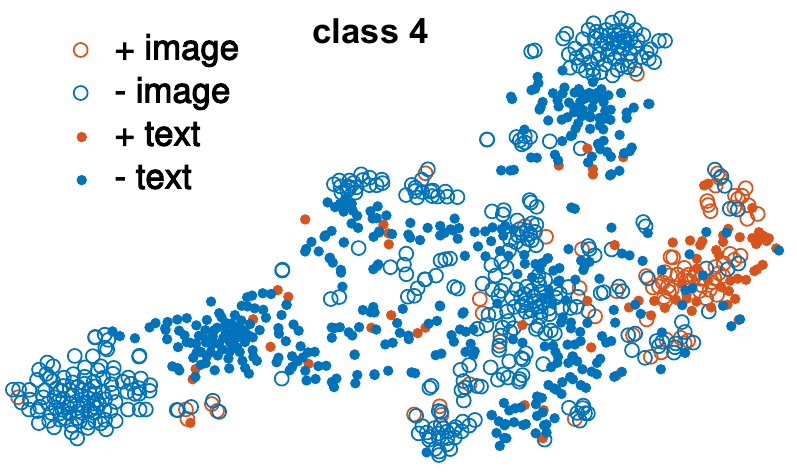}\\
\includegraphics[scale=.4]{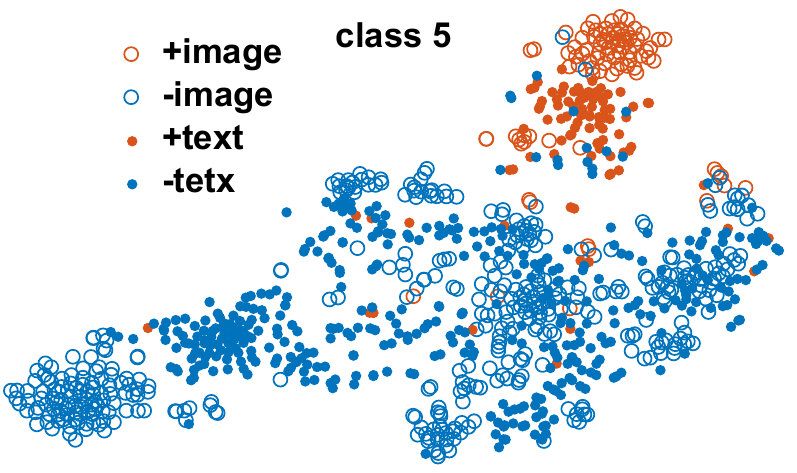}&
\includegraphics[scale=.4]{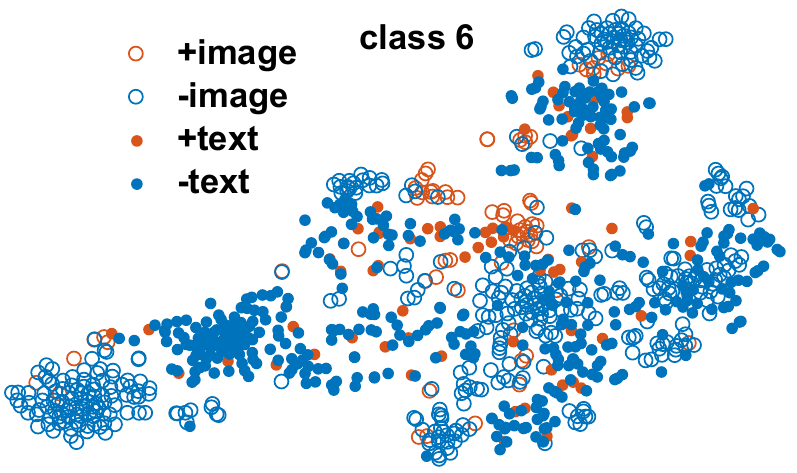}\\
\end{tabular}
\caption{The visualization of binary-like codes of testing image and
text learned by RMSH on the NUS-WIDE (six classes) by t-SNE.}
\label{fig:visualize_distribution}
\end{figure}

\begin{figure*}
\center
\begin{tabular}{ccc}
\includegraphics[scale=.45]{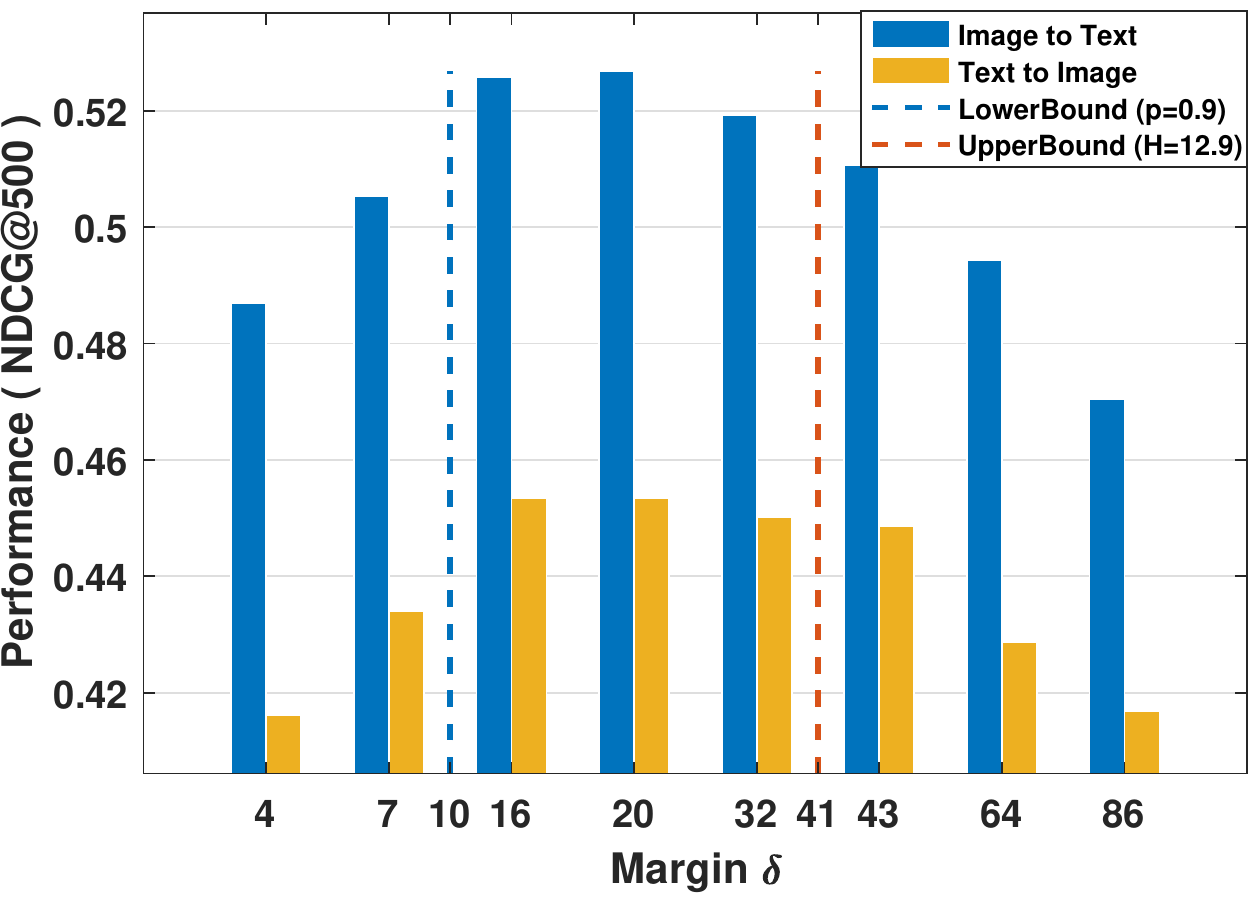}&
\includegraphics[scale=.45]{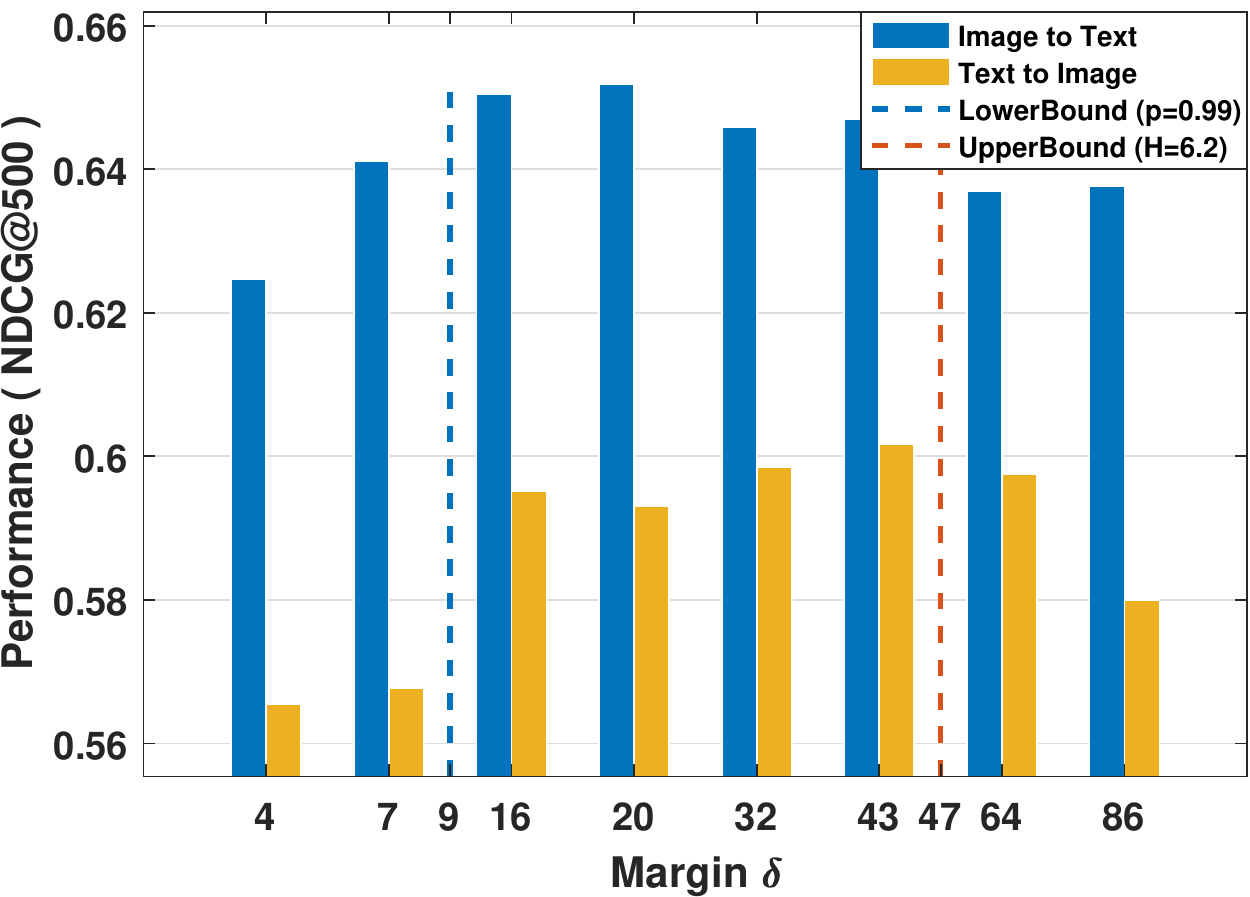}&
\includegraphics[scale=.45]{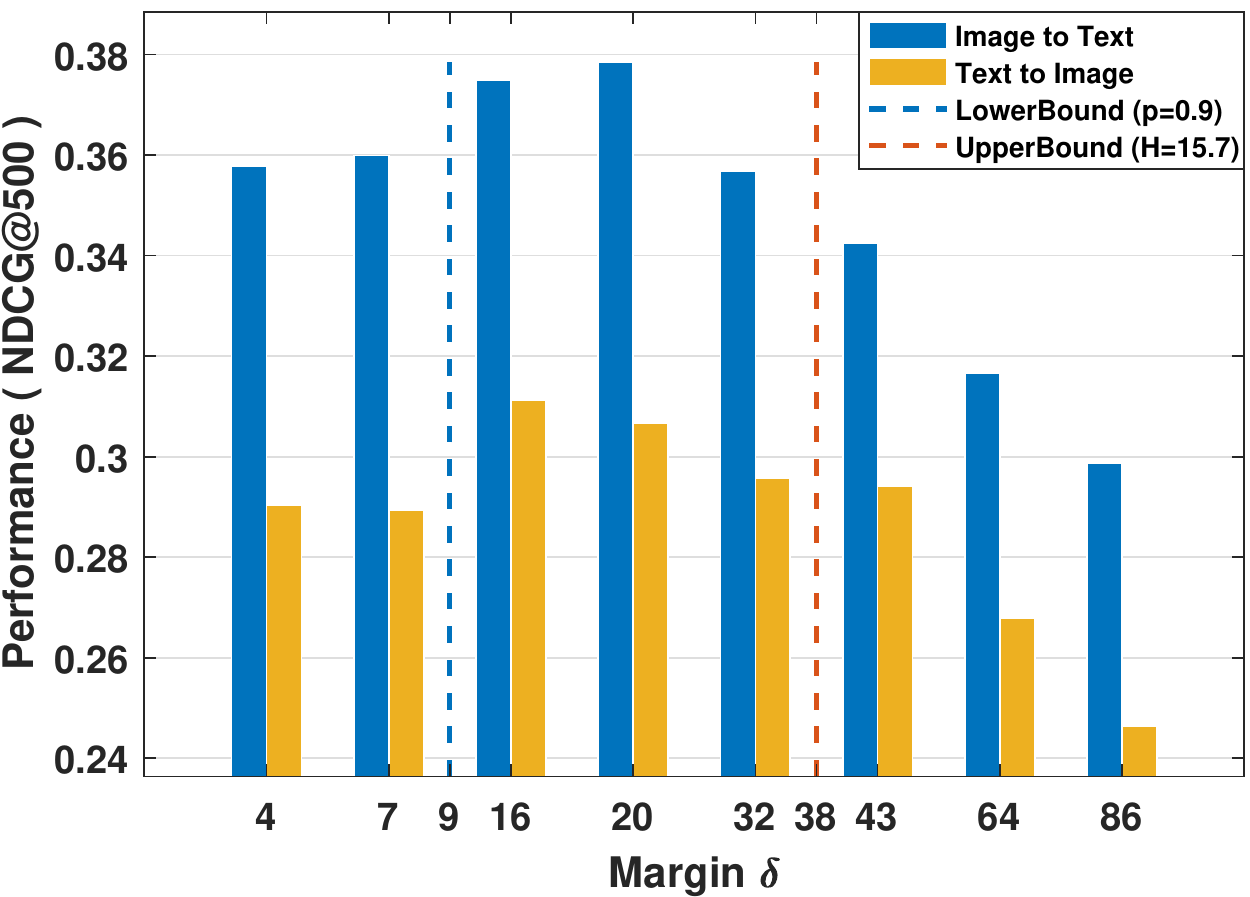}\\
@MIFLICKR25K&@NUS-WIDE& @MS COCO\\
\end{tabular}
\caption{The impact of different robust parameter $\delta$ in the
RMSH method on MIRFLICKR25K, NUS-WIDE, and MS COCO datasets. The
lower and upper bounds of effective $\delta$ are marked. The code
length is 128.} \label{fig:effective_margin_bound}
\end{figure*}

\subsection{Experimental Results}
\label{sec:exp3} Table \ref{tab:tab_compare_cross_hash} reports the
NDCG@500 results of compared methods and the precision-recall cures
with code length 64 on three datasets are shown in
Fig.~\ref{fig:campare_of_PR}. From Table
\ref{tab:tab_compare_cross_hash}, we observe that our RMSH method
substantially outperforms other compared methods on all used
datasets. Specifically, compared to the best deep method SVHN, our
RMSH obtains the relative increase of 0.2\%$\sim$5.86\%,
0.1\%$\sim$2.58\%, and 0.1\%$\sim$7.94\% with different bits for two
cross-modal retrieval tasks on MIRFLICKR25K, NUS-WIDE, and MS COCO
datasets, respectively. Because the SVHN method is learning to
preserve the binary cross-modal similarity structures, its hashing
codes cannot capture the fine-grained ranking information, thus
achieve inferior NDCG scores. In contrast, the RMSH learns the more
complicated similarity with more robustness to the modality
discrepancy and noise. From Fig.~\ref{fig:campare_of_PR}, we see
that the precision-recall evaluations are consistent with the NDCG
scores for two cross-modal retrieval tasks. These results re-confirm
the superiority of our RMSH method.

Fig.~\ref{fig:retrieval_example_coco_TI} and
Fig.~\ref{fig:retrieval_example_coco_IT} show the 'Text to Image'
and 'Image to Text' search example of compared methods on COCO
dataset. As can be seen, the RMSH method tends to retrieve more
relevant images than others for the query containing certain
concepts, e.g., person and dog. From
Fig.~\ref{fig:retrieval_example_coco_IT}, we observe that the RMSH
method can roughly capture the similarity across images and text
descriptions with multiple tags. Compared to other methods, the top
5 retrieved textual results of RMSH are more meaningful and
relevant.

To visualize the quality of learned codes by RMSH, we use t-SNE
tools to embed the 128 bits testing binary-like features of NUS-WIDE
datasets into 2-dimension spaces and visualize their distribution in
Fig.~\ref{fig:visualize_distribution}. As can be seen, both image
and text features provide a better separation between different
categories in the common space. Also, the features belonging to the
same class from different modalities appear to be compact. These
results indicate that the RMSH can preserve not only the multilevel
semantic similarity of intra-modal data but also that of the
inter-modal data.

\subsection{Analysis and Discussion}
\subsubsection{Impact of robust parameter $\delta$.}
To validate the correctness of the derived bounds of effective
robust parameter $\delta$ in Sec.~\ref{sec:emm}, we separately tune
$\delta$ in $\{K/40$, $K/20$, $K/8$, $3K/20$, $K/4$, $K/3$, $K/2$,
$2K/3$ $\}$ with other parameters fixed and report the retrieval
performance in Fig.~\ref{fig:effective_margin_bound}, where K=128.
We also mark the bounds of effective $\delta$ derived by
Eq.~(\ref{eq:upper_bound}) and (\ref{eq:lower_bound}) in
Fig.~\ref{fig:effective_margin_bound}. We see that the NDCG scores
of the RMSH method on all datasets are higher when setting $\delta$
in the range of bounds than when setting it out of the range. This
result experimentally proves the correctness of the bounds for more
robust cross-modal hash learning.

Besides, Fig.~\ref{fig:distance_of_margins} shows the distribution
of the Hamming distance of learned codes by RMSH. We observe that 1)
too small $\delta$ makes the dissimilar points be nearer, which
causes the difficulties of keeping multilevel similarity structure
of the similar points; 2) too large $\delta$ prefers to scatter
dissimilar points in the whole Hamming space, which is cline to
cause false coding since the number of different codes that satisfy
the distance constraint is limited.

\begin{table*}
\caption{Ablation study of RMSH in terms of NDCG@500 scores on
MIRFLICKR25K, NUS-WIDE, and MS COCO datasets. The best result is
shown in boldface.} \label{tab:influence_of_components}
\resizebox{\textwidth}{!}{%
\begin{tabular}{|c|c|c|c|c|c|c|c|c|c|c|c|c|c|}
\hline \multirow{2}{*}{Task} & \multirow{2}{*}{Method} &
\multicolumn{4}{c|}{MIRFLICKR25K} & \multicolumn{4}{c|}{NUS-WIDE} &
\multicolumn{4}{c|}{MS COCO} \\ \cline{3-14}
 &  & 16bits & 32bits & 64bits & 128bits & 16bits & 32bits & 64bits & 128bits & 16bits & 32bits & 64bits & 128bits \\ \hline
\multirow{4}{*}{I $\to$ T} & RMSH-T & 0.4053 & 0.4202 & 0.4524 & 0.4378 & 0.5437 & 0.5894 & 0.5894 & 0.5885 & 0.2759 & 0.3147 & 0.3430 & 0.3303 \\
 & RMSH-C & 0.4066 & 0.4143 & 0.4331 & 0.4597 & 0.5946 & 0.6071 & 0.6383 & \textbf{0.6593} & 0.1592 & 0.1811 & 0.2038 & 0.2218 \\
 & RMSH-P & 0.4290 & 0.4598 & 0.4920 & 0.5082 & 0.6042 & 0.6235 & 0.6248 & 0.6492 & 0.2918 & 0.3253 & 0.3331 & 0.3155 \\
 & RMSH &\textbf{0.4592} & \textbf{0.4940} & \textbf{0.5180} & \textbf{0.5183} & \textbf{0.6148} & \textbf{0.6401} & \textbf{0.6497} & 0.6574 & \textbf{0.3037} & \textbf{0.3724} & \textbf{0.3855} & \textbf{0.3998}\\ \hline
\multirow{4}{*}{T $\to$ I} & RMSH-T & 0.3905 & 0.4169 & 0.4061 & 0.3677 & 0.5369 & 0.5560 & 0.5467 & 0.5583 & 0.2575 & 0.2725 & 0.2978 & 0.3079 \\
 & RMSH-C & 0.3428 & 0.3696 & 0.3809 & 0.4071 & 0.5467 & 0.5502 & 0.5803 & \textbf{0.6069} & 0.1625 & 0.1714 & 0.1868 & 0.1983 \\
 & RMSH-P & 0.3912 & 0.4177 & 0.4226 & 0.4453 & 0.5580 & 0.5725 & 0.5723 & 0.5823 & 0.2587 & 0.2747 & 0.2774 & 0.2742 \\
& RMSH & \textbf{0.4379} & \textbf{0.4533} & \textbf{0.4562} &
\textbf{0.4628} & \textbf{0.5744} & \textbf{0.5921} &
\textbf{0.5900} & 0.5999 & \textbf{0.2897} & \textbf{0.3080} &
\textbf{0.3050} & \textbf{0.3175} \\ \hline
\end{tabular}%
}
\end{table*}
\subsubsection{Ablation Study.}
To analyze the effectiveness of different loss terms and the
pseudo-codes network in the proposed RMSH method, we separately
remove: the classification loss term $\mathcal{L}_{cl}$, the triplet
loss term $\mathcal{L}_{tr}$ and the pseudo-codes with others
remained to evaluate their influence on the final performance. These
three models are called {\bf RMSH-C}, {\bf RMSH-T}, and {\bf
RMSH-P}. Table \ref{tab:influence_of_components} shows the result.
We see that separately removing them will damage the retrieval
performance to varying degrees. Notably, 1) the performance gap
between RMSH and RMSH-T enlarges with the code length increasing on
the MIRFLICKR25K dataset, which confirms the importance of the
margin-adaptive triplet loss for preserving the fine-grained
similarities of multi-label data in larger Hamming space. 2) As the
code length increases on the MIRFLICKR25K and NUS-WIDE datasets, the
triplet loss term shows more influence on the final performance than
the classification loss term, which indicates that the exploitation
of ranking information is more critical than the discriminative
information to capture similarity structure.  3) Because the
similarity information on the MS COCO dataset (80 concepts) is more
sparse, the improvement introduced by the strategy of the
pseudo-codes is more significant, i.e., the difference between RMSH
and RMSH-P.

\begin{figure}
\center
\begin{tabular}{cc}
\includegraphics[scale=.32]{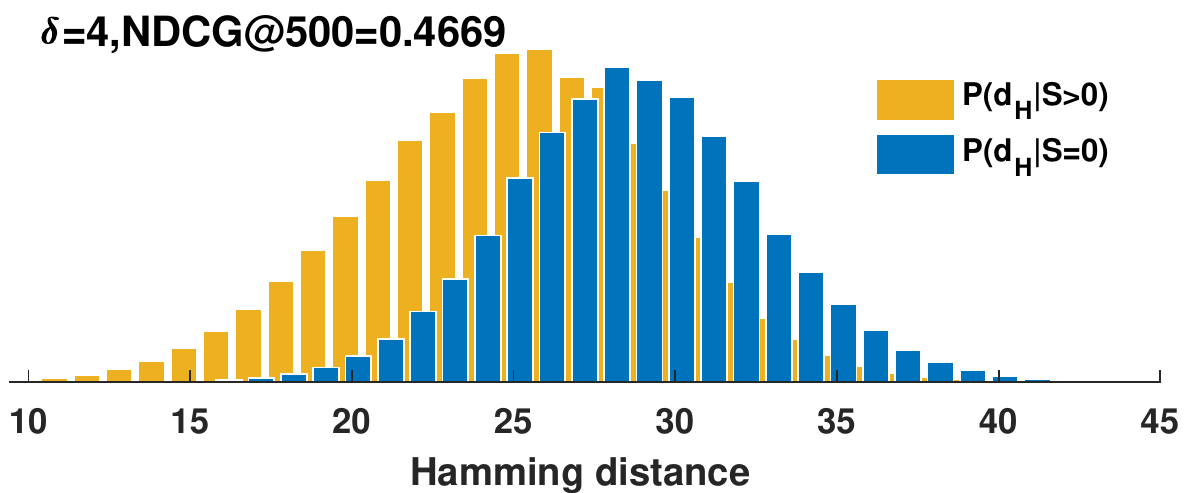}&
\includegraphics[scale=.32]{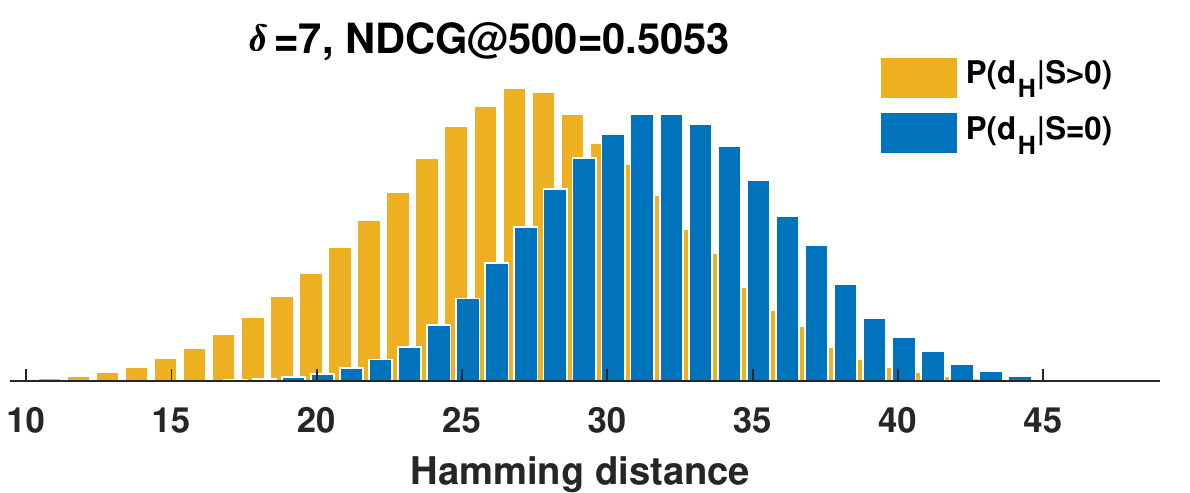}\\
\includegraphics[scale=.32]{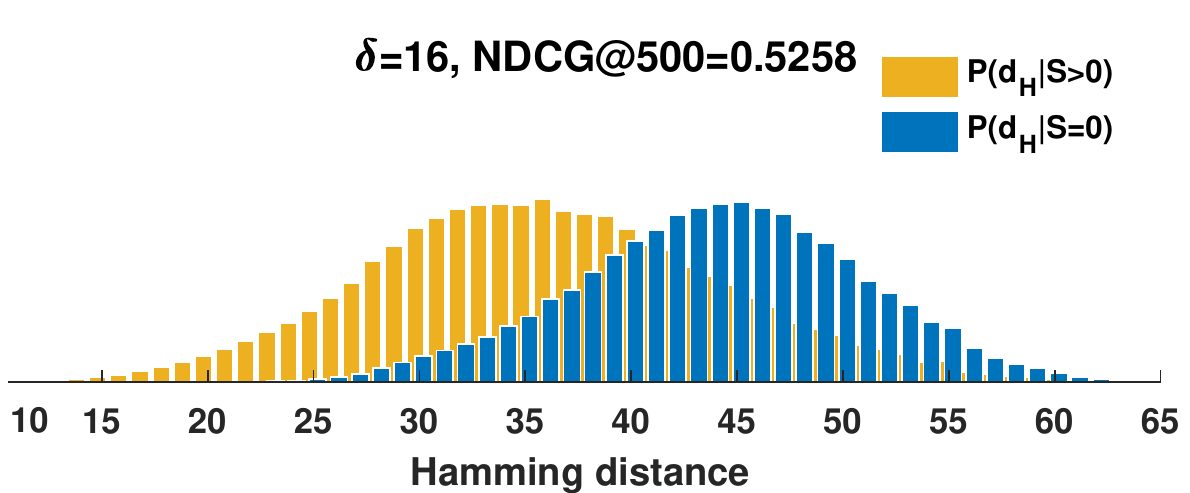}&
\includegraphics[scale=.32]{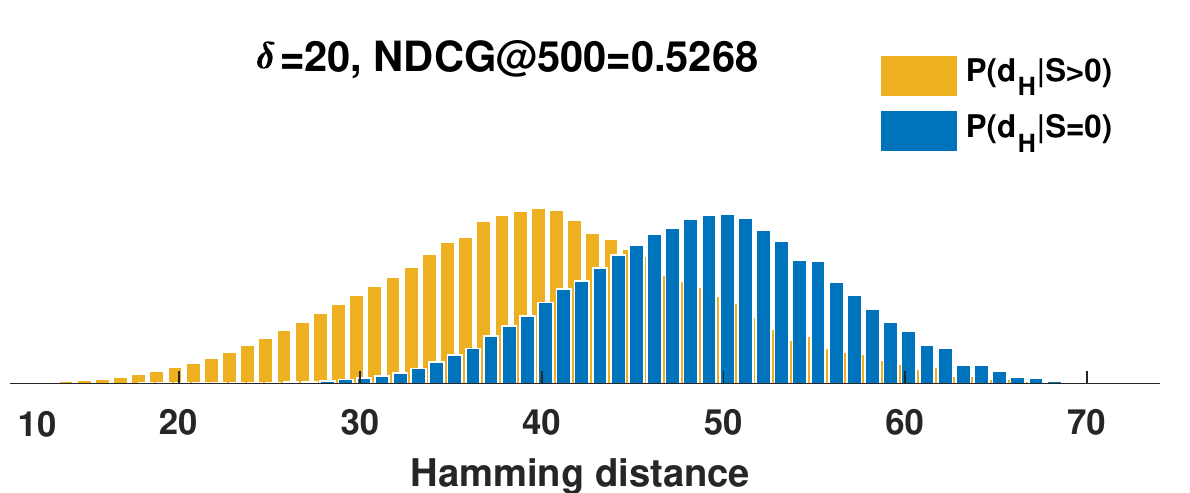}\\
\includegraphics[scale=.32]{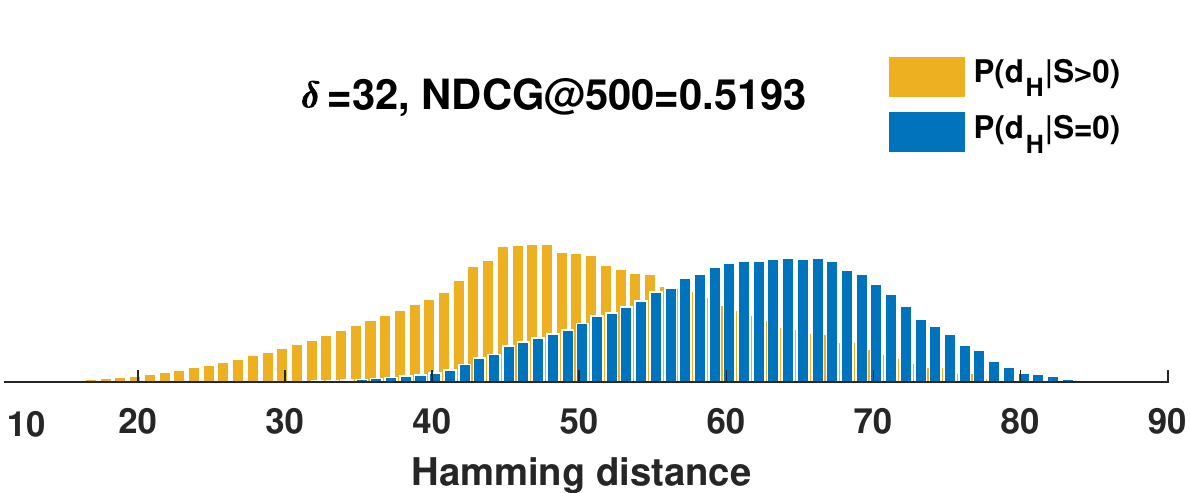}&
\includegraphics[scale=.32]{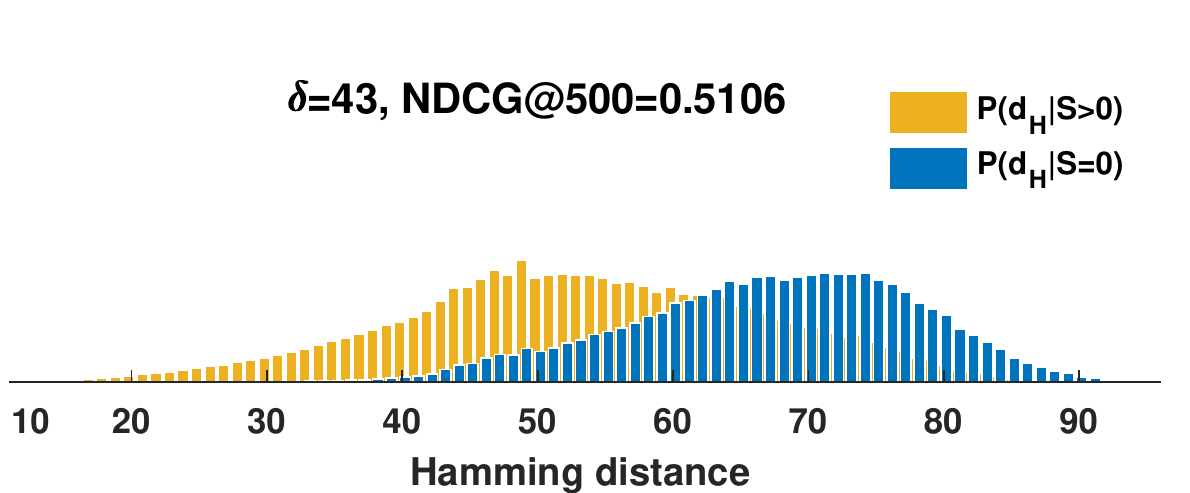}\\
\includegraphics[scale=.32]{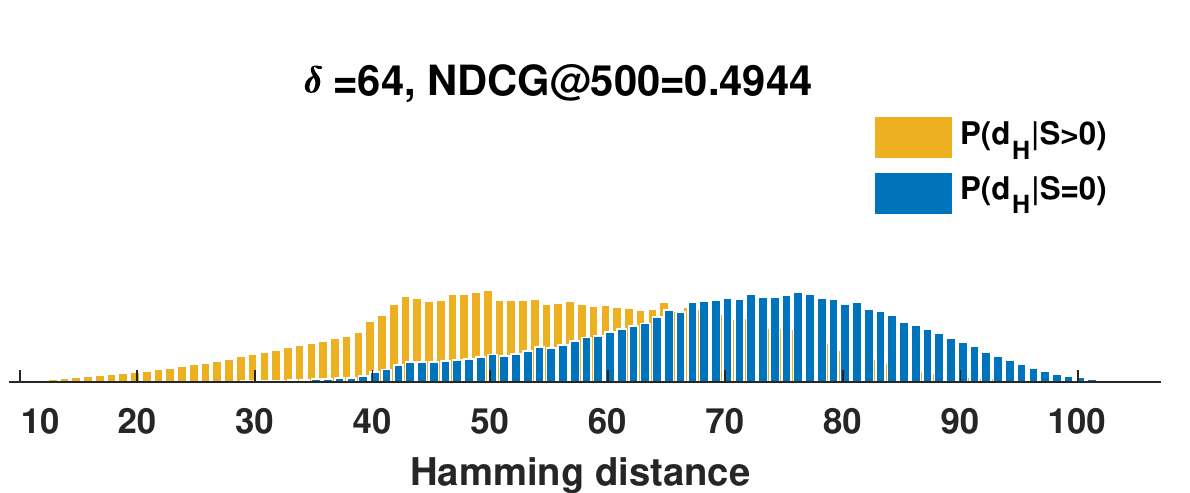}&
\includegraphics[scale=.32]{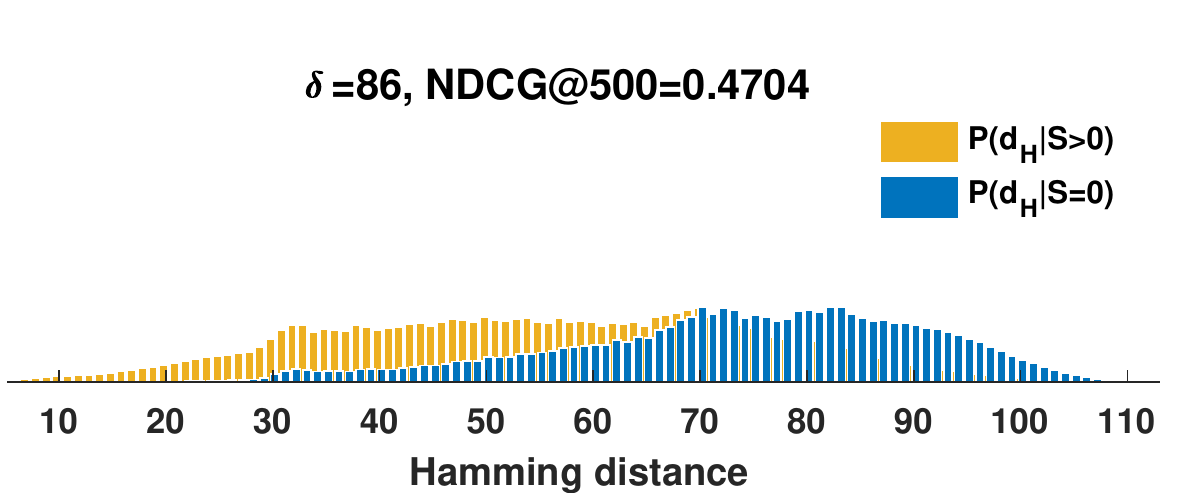}\\
\end{tabular}
\caption{The conditional distributions (averaged on the MIRFLICKR25K
image test set ) of Hamming distance between testing image codes and
training text codes when given semantic similarity, after optimizing
RMSH with different robust parameter $\delta$.}
\label{fig:distance_of_margins}
\end{figure}

\begin{figure}
\center
\begin{tabular}{cc}
\includegraphics[scale=.3]{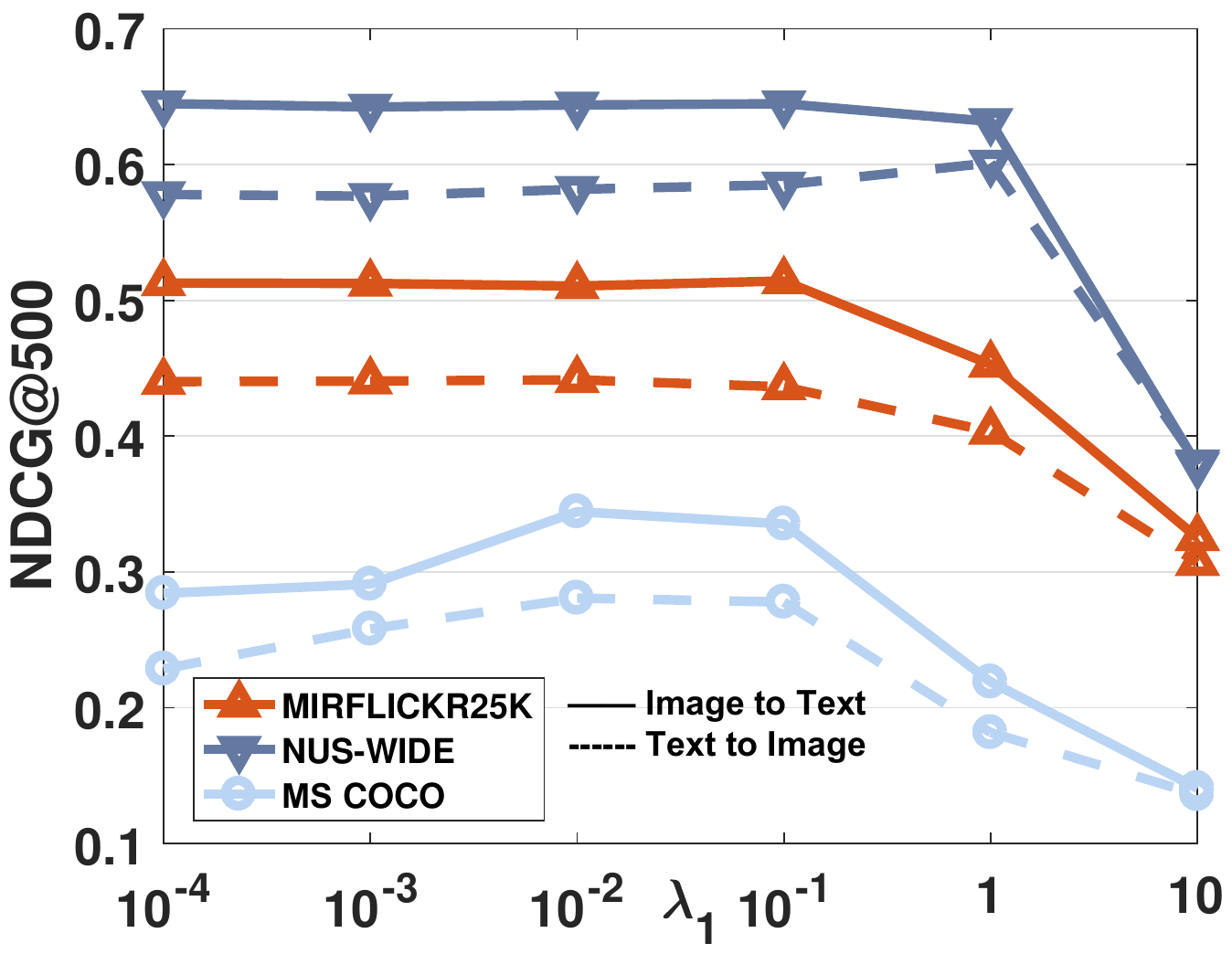}&
\includegraphics[scale=.3]{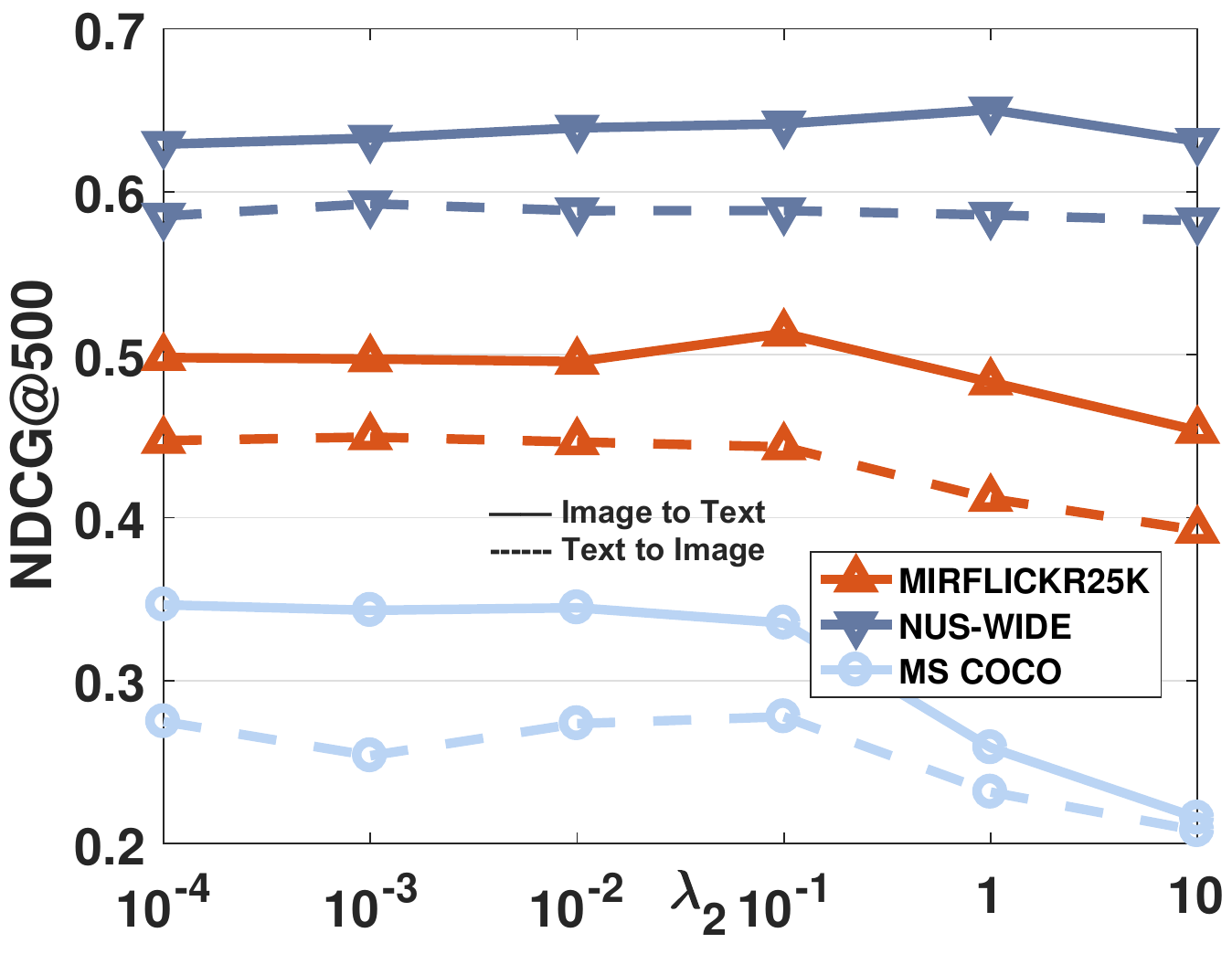}\\
\includegraphics[scale=.3]{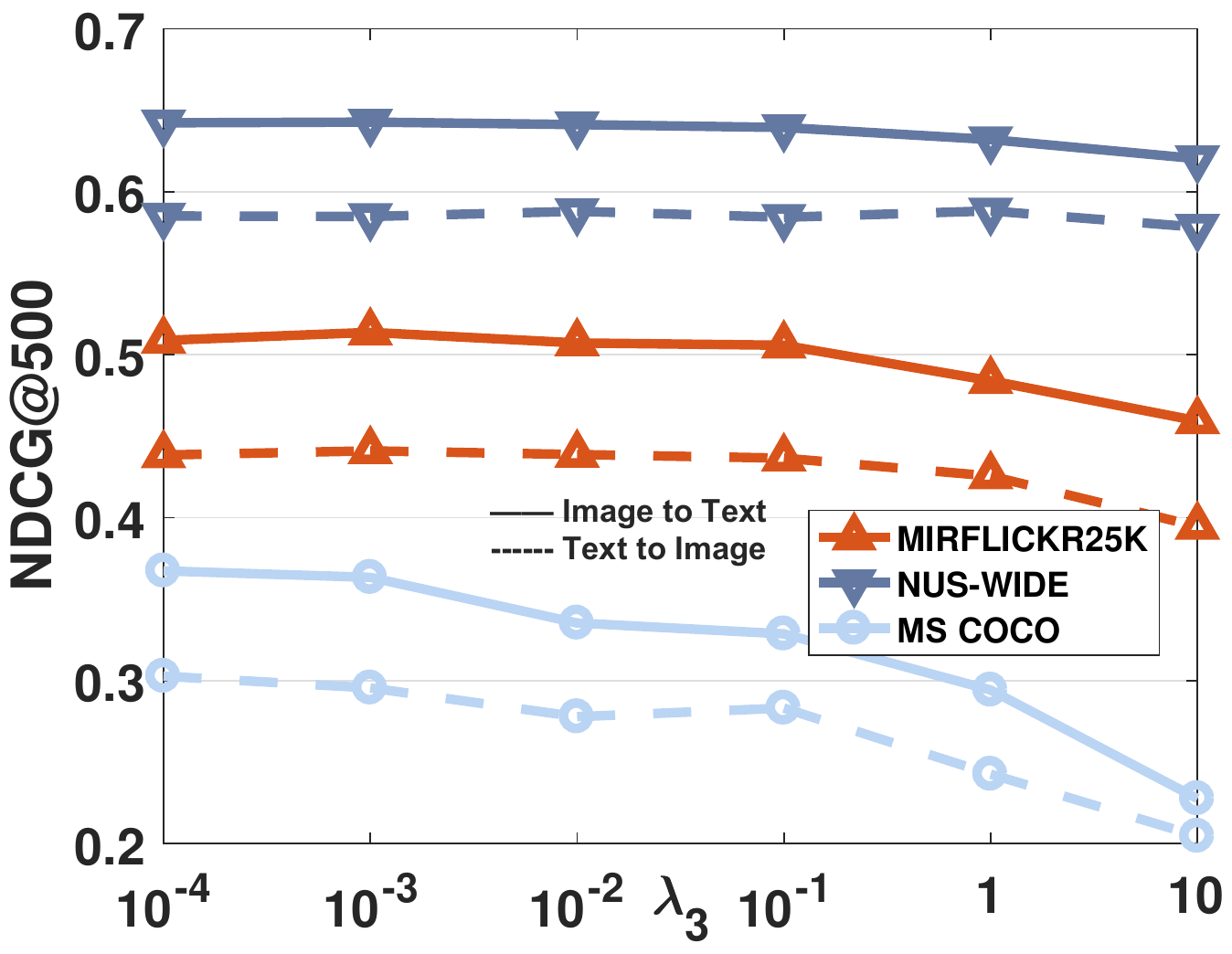}&
\includegraphics[scale=.3]{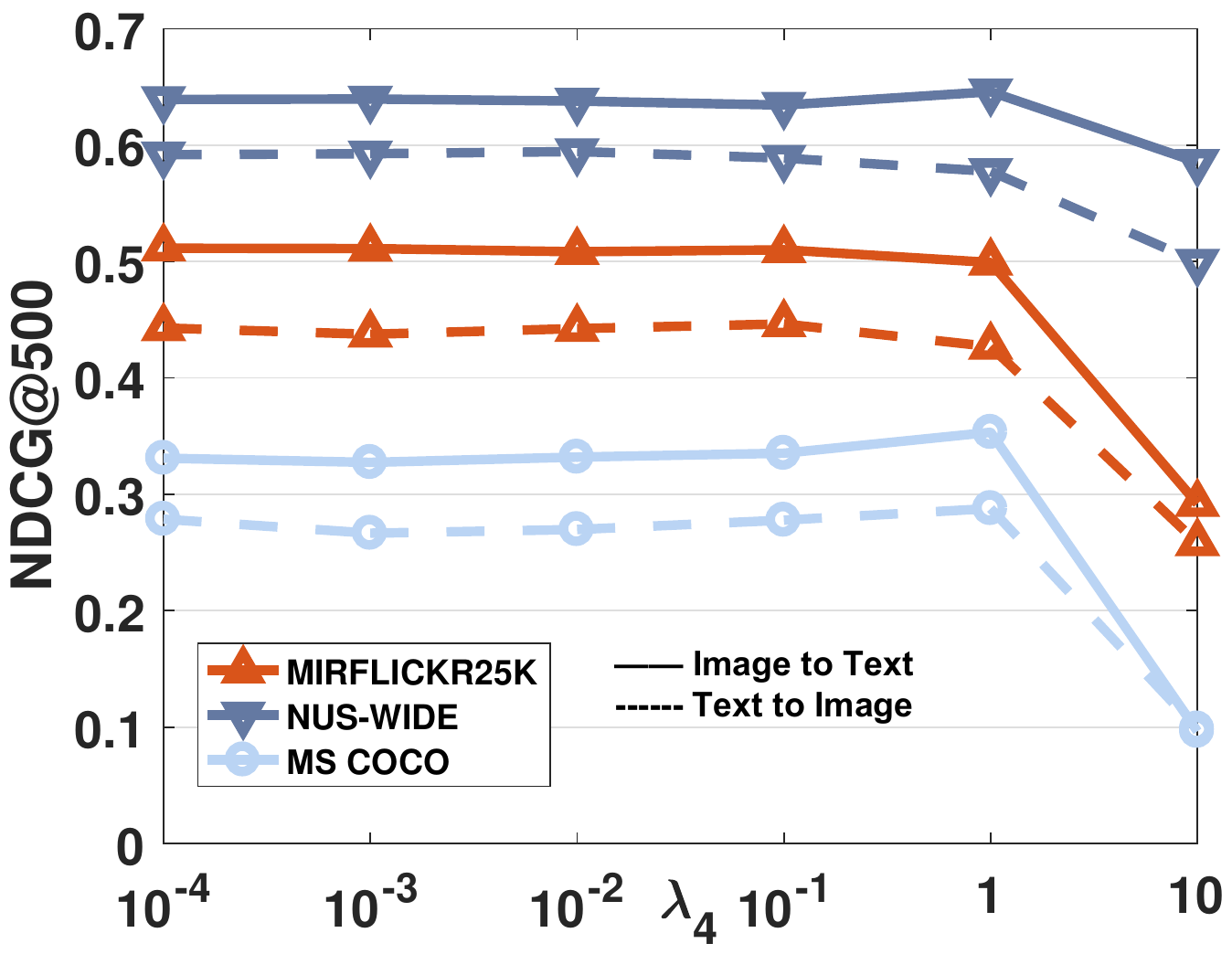}\\
\end{tabular}
\caption{Parameter analysis of $\lambda_1$, $\lambda_2$, $\lambda_3$
and $\lambda_4$ on MIRFLICKR25K, NUS-WIDE, and COCO dataset.}
\label{fig:influence_of_parameters}
\end{figure}

\subsection{Parameter Analysis.}
In this section, we conducted the experiments to analyze the
parameter sensitive of RMSH on three datasets by varying the one
parameter while fixing others. There are mainly four
hyper-parameters in RMSH, i.e., $\lambda_1$, $\lambda_2$,
$\lambda_3$, and $\lambda_4$. The NDCG@500 values with different
settings of hyper-parameters in the case of 64 bits on two
cross-modal retrieval tasks are summarized in
Figure~\ref{fig:influence_of_parameters}. From the figure, 1) we
observe that the NDCG scores first obtain a slight increase and then
degrades as $\lambda_1$ and $\lambda_2$ becomes larger,
respectively. This indicates that proper $\lambda_1$ and $\lambda_2$
setting is helpful to preserve intra-modal and inter-modal
similarities simultaneously, a huge gap between the value of
$\lambda_1$ and $\lambda_2$ will break this balance and hurt the
final performance. 2)The performance emerges decrease when
$\lambda_3$ gets large since weighing more the classification loss
of pseudo-codes will introduce more noise to the optimization
process. 3) The result of $\lambda_4$ is similar to that of
$\lambda_1$. A suitable $\lambda_4$ is useful to reduce the
quantization loss, but a large $\lambda_4$ will weaken the goal of
preserving similarity.

\section{Conclusion}
We have presented a novel robust multilevel semantic hashing for
multi-label cross-modal retrieval. The approach preserves the
multilevel semantic similarity of data and explicitly ensures the
distance between different codes is larger than a specific value for
robustness to the modality discrepancy. Mainly, we give an effective
range of this value from the information coding theory analysis and
characterize the above goal as a margin-adaptive triplet loss. We
further introduce a pseudo-codes network for the imbalanced
semantics. Our approach yields the state-of-the-art empirical
results on three benchmarks.


\section*{Acknowledgements}
This work is partially supported by National Science Foundation of
China (61976115,61672280, 61732006), AI+ Project of
NUAA(NZ2020012,56XZA18009), China Scholarship Council(201906830057).

\bibliographystyle{elsarticle-num}
\bibliography{ref_pr}

\end{document}